\newcommand{\ie}{{\it i.e.}{ }}
\newcommand{\etal}{{\it et al.}\xspace}
\newcommand{\0} {{\boldsymbol 0}}
\newcommand{\1} {{\boldsymbol 1}}
\newcommand{\x} {{\boldsymbol x}}
\newcommand{\f} {{\boldsymbol \varsigma}}
\newcommand{\F} {{\boldsymbol f}}
\newcommand{\h} {{\boldsymbol h}}
\renewcommand{\P} {{\bf P}}
\renewcommand{\H} {{\bf  H}}
\newcommand{\q} {{\boldsymbol q}}
\renewcommand{\u} {{\boldsymbol u}}
\renewcommand{\d} {{\boldsymbol d}}
\renewcommand{\v} {{\boldsymbol v}}
\newcommand{\w} {{\boldsymbol \omega}}
\newcommand{\gammav}{{\bm\gamma}}
\newcommand{\rhos} {{\bm \rho}}
\newcommand{\xis}{{\bm \xi}}
\newcommand{\wt} {{\omega^{(t)}}}
\newcommand{\wj} {{\omega^{(j)}}}
\newcommand{\M} {{\mathbf{M}}}
\def\Xset{\mathbb{X}}
\def\Yset{\mathbb{Y}}
\def\Fset{\mathbf{F}}
\def\Rset{\mathbb{R}}
\newcommand{\dom} {{\textbf{dom}}}
\newcommand{\wone} {{\| \boldsymbol \omega \|_1}}
\newcommand{\cbutyi}{{c = 1 \atop c \neq y_i}}
\newcommand{\cbey}{{c \in \mathscr{Y} \atop c \neq y_i}}
\newcommand{\sumN} {{\sum_{i=1}^N}}
\newcommand{\sumT} {{\sum_{t=1}^T}}
\newcommand{\sumNK} {{\sum_{i=1}^N \sum_{k=1}^K}}
\newcommand{\sumNL} {{\sum_{i=1}^N \sum_{l=1}^L}}
\newcommand{\sumNC} {{\sum_{i=1}^N \sum_{c=1}^C}}
\newcommand{\sumNCR}{{\sum_{i=1}^N \sum_{\cbutyi}^C}}
\newcommand{\dtci}{{\delta_{c,y_i}}}
\newcommand\T{{\!\top}}
\newcommand{\st}    {{\mathrm{s.t.}}}
\newcommand{\argmax}{\mathop{\rm argmax}}
\newcommand{\Lone}  {{\ell_1}}
\newcommand{\Th}    {{\text{-}{\rm th}}}
\newcommand\ind{\mathbbm{1}}
\newcommand{\Input} {\textbf{Input}}
\newcommand{\outpt}{\textbf{Output}}
\newcommand{\Init} {\textbf{Initialization}}
\newcommand{\If} {\textbf{if}{ }}
\newcommand{\then} {\textbf{then}{ }}
\newcommand{\tab}   {\hspace{5.4 mm}} %
\newcommand{\btab}  {\hspace{9.6 mm}} 
\newcommand{\MultiA}{MultiBoost.MO\xspace}
\newcommand{\MultiB}{MultiBoost.ECC\xspace}
\newcommand{\thy}  {{\emph{thyroid}}}
\newcommand{\dna}  {{\emph{dna}}}
\newcommand{\svf}  {{\emph{svmguide4}}}
\newcommand{\win}  {{\emph{wine}}}
\newcommand{\iri}  {{\emph{iris}}}
\newcommand{\veh}  {{\emph{vehicle}}}
\newcommand{\gla}  {{\emph{glass}}}
\newcommand{\svt}  {{\emph{svmguide2}}}
\newcommand{\comment}[1]{}
\newtheorem{theorem}{Theorem}
\begin{document}

\title{ Totally Corrective Multiclass Boosting
        \\ with Binary Weak Learners}

\author{
         Zhihui Hao,
         Chunhua~Shen,
         Nick Barnes,
         and
         Bo Wang
\thanks
{
Z. Hao and B. Wang are with Beijing Institute of Technology, Beijing 100081, China.
(e-mail: hzhbit@gmail.com and wangbo@bit.edu.cn),
Z. Hao's contribution was made when he was visiting
NICTA Canberra Research Laboratory
and the Australian National University.
}
\thanks
{
C. Shen and N. Barnes are with NICTA, Canberra Research Laboratory,
Canberra, ACT 2601, Australia,
and also with the Australian National University, Canberra,
ACT 0200, Australia
(e-mail: chunhua.shen@nicta.com.au, nick.barnes@nicta.com.au).
Correspondence should be addressed to C. Shen.
}

\thanks
{
NICTA is funded by the Australian Government as represented by the
Department of Broadband, Communications and the Digital Economy and the
Australian Research Council through the ICT Center of Excellence program.
This work was also supported in part by the Australian Research
Council through its special research initiative in bionic vision science and
technology grant to Bionic Vision Australia.
}
}

\markboth{September 2010}
{   Hao \MakeLowercase{\textit{et al.}}:
    Totally Corrective Multiclass Boosting with Binary Weak Learners
}

\maketitle

\begin{abstract}

    In this work,  we propose a new optimization framework
    for multiclass boosting learning.
    In the literature,
    AdaBoost.MO and AdaBoost.ECC are the two successful multiclass boosting
    algorithms, which can use binary weak learners.
    We explicitly derive these two algorithms' Lagrange dual problems
    based on their regularized loss functions.
    We show that the Lagrange dual formulations enable us to
    design totally-corrective multiclass
    algorithms by using the primal-dual optimization technique.
    Experiments on benchmark data sets suggest that
    our multiclass boosting
    can achieve a comparable generalization capability with state-of-the-art,
    but the convergence speed is much faster than stage-wise gradient descent boosting.
    In other words, the new totally corrective algorithms
    can maximize the margin more aggressively.

\end{abstract}

\begin{IEEEkeywords}
        Multiclass boosting,
        totally corrective boosting,
        column generation,
        convex optimization.
\end{IEEEkeywords}

\section{Introduction} \label{Sect:Intro}

    Boosting is a powerful learning technique
    for improving the accuracy of any given classification algorithm.
    It has been attracting much research interest in the machine learning
    and pattern recognition community.
    Since Viola and Jones applied boosting to face detection \cite{Viola2004Robust},
    it has shown great success in computer vision,
    including the applications of object detection and tracking
    \cite{ong2004boosted,okuma2004boosted}
    generic object recognition \cite{opelt2006generic},
    image classification \cite{dollar2007feature}
    and retrieval \cite{tieu2004boosting}.

    The essential idea of boosting is to find a combination of \emph{weak} hypotheses
    generated by a base learning oracle.
    The learned ensemble is called the \emph{strong} classifier
    in the sense that it often achieves a much higher accuracy.
    One of the most popular boosting algorithms is AdaBoost \cite{schapire1999improved},
    which has been proven  a method of minimizing
    the regularized exponential loss function \cite{schapire1999improved,shen2010dual}.
    There are many variations on AdaBoost in the literature.
    For example, LogitBoost \cite{hastie2000additive},
    optimizes the logistic regression loss instead of the exponential loss.
    To understand how boosting works,
    Schapire \etal \cite{schapire1998boosting} introduced the margin theory and
    suggested that
    boosting is especially effective at maximizing the margins of training examples.
    Based on this concept, Demiriz \etal \cite{demiriz2002linear} proposed LPBoost,
    which maximizes the minimum margin using the hinge loss.

    Since most of the pattern classification problems in real world are multiclass
    problems, researchers have extended binary boosting algorithms to
    the multiclass case.
    For example in  \cite{Freund1996New}, Freund and Schapire have described
    two possible extensions of AdaBoost to the multiclass case.
    AdaBoost.M1 is the first and perhaps the most direct extension.
    In AdaBoost.M1, a weak hypothesis assigns only one of $C$ possible labels to each instance.
    Consequently, the requirement for weak hypotheses that training error
    must be less than $1/2$ becomes harder to achieve,
    since random guessing only has an accuracy rate of $1/C$ in multiclass case.
    To overcome this difficulty, AdaBoost.M2 introduced a relaxed error measurement
    termed \emph{pseudo-loss}. In AdaBoost.M2,
    the weak hypothesis is required  to answer $C-1$ questions
    for one training example $(\x_i, y_i)$:
    which is the label of $\x_i$, $ c $ or $ y_i $ ($\forall c \neq y_i$)?
    A falsely matched pair $(\x_i, c)$ is called a \emph{mislabel}.
    Pseudo-loss is defined as the weighted average of the probabilities of all incorrect answers.
    Recently, Zhu \etal \cite{zhu2006multi} proposed a multiclass exponential loss function.
    Boosting algorithms based on this loss, including SAMME \cite{zhu2006multi}
    and GAMBLE \cite{huang2007efficient}
    only require the weak hypothesis performs better than random guessing ($1/C$).

    The above-mentioned multiclass boosting algorithms  have a common property:
    the employed weak hypotheses  should have the ability
    to give predictions on all $C$ possible labels at each call.
    Some powerful weak learning methods may be competent, like decision trees.
    However, they are complicated and time-consuming for training compared with binary learners.
    A higher complexity of assembled classifier often implies
    a larger risk of over-fitting the training data
    and possible decreasing of the generalization ability.

    Therefore, it is natural to put forward another idea:
    if a multiclass problem can be reduced into multiple two-class ones,
    binary weak learning method such as linear discriminant analysis
    \cite{skurichina2002bagging, masip2005boosted},
    decision stump or product of decision stumps \cite{kegl2009boosting}
    might be applicable to these decomposed subproblems.
    To make the reduction, one has to introduce some appropriate coding strategy
    to translate each label to a fixed binary string,
    which is usually referred to as a \emph{codeword}.
    Then weak hypotheses can be trained at every bit position.
    For a test example, the label is predicted by decoding the codeword
    computed from the strong classifier.
    %
    %
    %
    AdaBoost.MO \cite{schapire1999improved} is a representative algorithm
    with this coding-decoding process.
    To increase the distance between codewords and thus improve the error correcting ability,
    Dietterich and Bakiri's error-correcting output codes (ECOC) \cite{dietterich1995solving}
    can be used in AdaBoost.MO.
    A variant of AdaBoost.MO, AdaBoost.OC, also combines boosting and ECOC.
    However, unlike AdaBoost.MO, AdaBoost.OC employs a collection
    of randomly generated codewords.
    For more details about the random methods, we refer the reader to \cite{schapire1997using}.
    AdaBoost.OC penalizes both the wrongly classified examples
    and the mislabels in correctly classified examples by calculating pseudo-loss.
    Hence, AdaBoost.OC may be viewed as a special case of AdaBoost.M2.
    The difference is that, weak hypotheses in AdaBoost.OC are required to answer one binary question
    for each training example: which is the label for $\x_i$ in the current round, $0$ or $1$?
    Later, Guruswami and Sahai \cite{guruswami1999multiclass} proposed a variant, AdaBoost.ECC,
    which replaces pseudo-loss with the common measurement to compute training errors.

    In this work, we mainly focus on the multiclass boosting algorithms
    with {\em binary weak learners}. Specifically, AdaBoost.MO and AdaBoost.ECC.
    It has been proven AdaBoost.OC is in fact a shrinkage version of AdaBoost.ECC
    \cite{sun2007unifying}.
    These two algorithms both perform stage-wise functional gradient descent procedures
    on the exponential loss function \cite{sun2007unifying}.
    In \cite{shen2010dual}, Shen and Li have shown that $\Lone$ norm regularized
    boosting algorithms including AdaBoost, LogitBoost and LPBoost,
    might be optimized through optimizing their corresponding dual problems.
    This primal-dual optimization technique is also {\em implicitly}
    applied by other studies
    to explore the principles of boosting learning
    \cite{kivinen1999boosting,crammer2001algorithmic}.
    Here we study AdaBoost.MO and AdaBoost.ECC
    and explicitly derive their Lagrange dual problems.
    Based on the primal-dual pairs,
    we put these two algorithms into a column generation based
    primal-dual optimization framework.
    We also analytically show the boosting algorithms that we proposed
    are {\em totally corrective} in a relaxed fashion.
    Therefore, the proposed algorithms seem to  converge more effectively
    and  maximize the margin of training examples more aggressively.
    To our knowledge, our proposed algorithms are the first totally corrective
    multiclass boosting algorithms.

    The notation  used in this paper is as follows.
    We use the symbol $\M$ to denote a coding matrix with $M(a,b)$ being its $(a,b)$-th entry.
    Bold letters ($\u, \v$) denote column vectors.
    $\0$ and $\1$ are vectors with all entries being $0$ and $1$ respectively.
    The inner product of two column vectors $\u$ and $\v$
    are expressed as $\u^\T \v = \sum_i u_i v_i$.
    Symbols $\succeq$, $\preceq$ placed between two vectors indicate
    that the inequality relationship holds for all pairwise entries.
    Double-barred letters ($\Rset$, $\Yset$) denote  specific domains or sets.
    The abbreviation $\st$ means ``subject to".
    $\ind (\pi)$ denotes an indicator function
    which gives $1$ if $\pi$ is true and $0$ otherwise.

    The remaining content  is organized as follows.
    In Section \ref{sect:algs} we briefly review the coding strategies
    in multiclass boosting learning
    and describe the algorithms of AdaBoost.MO and AdaBoost.ECC.
    Then in Section \ref{Sect:AlgsTC} we derive the primal-dual relations
    and propose our multiclass boosting learning framework.
    In Section \ref{Sect:Exps} we compare the related algorithms through several experiments.
    We conclude the paper in Section \ref{Sect:cons}.

    \begin{algorithm}[!ht]
    \caption{AdaBoost.MO (Schapire and Singer, 1999)}
    \label{alg:mo}
    \begin{algorithmic}
    \STATE
        \Input\ training data $ (\x_i, y_i)$, $y_i \in \{1,\dots, C\}$, $i =1,\dots, N$;\\
        \btab   maximum training iterations $T$, and the coding matrix $\M^{C\times L}$.
    \STATE
        \Init\\
        \tab   Weight distribution\\
        \tab    $u_{i,l} = \frac{1}{NL}$, $ i = 1,\dots, N $, $l = 1,\dots, L$. \\
    \FOR{$t = 1$ : $T$ }
    \STATE
        a) Normalize $\u$;
    \STATE
        b) Train $L$ weak hypotheses $h_l^{(t)}(\cdot)$ according to the weight distribution $\u$;
    \STATE
        c) Compute $\epsilon = \sum_i \sum_l u_{i,l} \ind (M(y_i, l) \neq h_l^{(t)}(\x_i))$;
    \STATE
        d) Compute $\wt = \frac{1}{2} \ln (\frac{1-\epsilon}{\epsilon})$;
    \STATE
        e) Update $u_{i,l} = u_{i,l} \exp \bigl( - \wt M(y_i, l)h_l^{(t)}(\x_i)
                                          \bigr)$;
    \ENDFOR
    \STATE
        \outpt\ $\F(\cdot)=\bigl[
          \sum_t\wt h_1^{(t)}(\cdot), \cdots, \sum_t\wt h_L^{(t)}(\cdot)
          \bigr]^\T$.
    \end{algorithmic}
    \end{algorithm}

\section{Multiclass boosting algorithms and coding matrix}
\label{sect:algs}

    In this section, we briefly review the multiclass boosting algorithms of
    AdaBoost.MO \cite{schapire1999improved}
    and AdaBoost.ECC \cite{guruswami1999multiclass}.

    A typical multiclass classification problem can be expressed as follows.
    A training set for learning is given by $\{(\x_i, y_i)\}_{i=1}^N$.
    Here $ \x_i $ is a pattern and $ y_i $ is the label,
    which takes a value from the space $\Yset = \{1,2,\dots, C \}$ if we have $C$ classes.
    The goal of classification is then
    to find a classifier $ \F: \Xset \rightarrow \Yset $
    which assigns one and only one label to a new observation $(\x, y)$
    with a minimal probability of $ y \neq \F(\x) $.
    Boosting algorithm tries to find an ensemble function in the form of
    $\F(\x) = \sumT \wt h^{(t)}(\x)$ (or equivalently the normalized version
    $ \sumT \F(\x)/ \sum_t \wt$),
    where $h(\cdot)$ denotes the weak hypotheses generated by base learning algorithm
    and $\w = [\omega^{(1)} \cdots \omega^{(T)}]^\T$ denotes the associated coefficient vector.
    Typically, a weight distribution $\u$ is used on training data, which essentially makes the
    learning algorithm
    concentrate on those examples that are hard to distinguish.
    The weighted training error of hypothesis $h(\cdot)$ on $\u$
    is given by $\sum_i u_i \ind(y_i \neq h(\x_i))$.


    To decompose a multiclass problem into several binary subproblems,
    a coding matrix $\M \in \{\pm 1\}^{C\times L}$ is required.
    Let $M(c,:)$ denote the $c\Th$ row, which represents a $L$-length codeword for class $c$.
    One binary hypothesis can be learned then for each column,
    where training examples has been relabeled into two classes.
    For a newly observed instance $\x$, $\F(\x)$ outputs an unknown codeword.
    Hamming distance or some loss-based measure is used to calculate the distances
    between this word and rows in $\M$.
    The ``closest" row is identified as the predicted label.
    For binary strings, loss-based measures are equivalent to Hamming distance.

    \begin{algorithm}
    \caption{AdaBoost.ECC (Guruswami and Sahai, 1999)}
    \label{alg:ecc}
    \begin{algorithmic}
    \STATE
        \Input\ training data $ (\x_i, y_i)$, $y_i \in \{1,\dots, C\}$, $i =1,\dots, N$;\\
        \btab   maximum training iterations $T$.
    \STATE
        \Init\\
        \tab The coding matrix $\M = [{\over}]$, and   the weight distribution \\
        \tab    $u_{i,c} = \frac{1}{N(C-1)}$,$ i = 1,\dots, N $, $c = 1,\dots, C$. \\
    \FOR{$t = 1$ : $T$ }
    \STATE
        a) Create $M(:, t) \in \{ -1, +1 \}^{C \times 1}$;
    \STATE
        b) Normalize $\u$;
    \STATE
        c) Compute weight distribution for mislabels \\
        \tab    $d_i = \sum_c u_{i,c} \ind (M(c, t) \neq M(y_i, t)) $;
    \STATE
        d) Normalize $\d$;
    \STATE
        e) Train a weak hypothesis $h^{(t)}(\cdot)$ using $\d$;
    \STATE
        f) Compute $\epsilon = \sum_i d_i \ind (M(y_i, t) \neq h^{(t)}(\x_i))$;
    \STATE
        g) Compute $\wt = \frac{1}{4} \ln (\frac{1-\epsilon}{\epsilon})$;
    \STATE
        h) Update $u_{i,c} = $\\
        \tab    $u_{i,c} \exp
                \left( - \wt \left(M(y_i, t) - M(c,t) \right) h^{(t)}(\x_i)\right)$;
    \ENDFOR
    \STATE
        \outpt\ $\F(\cdot)=[\omega^{(1)} h^{(1)}(\cdot), \cdots,
                    \omega^{(T)} h^{(T)}(\cdot)]^\T$.
    \end{algorithmic}
    \end{algorithm}

    The coding process can be viewed as a mapping to a new higher dimensional space.
    If the codewords in the new space are mutually distant,
    the more powerful error-correction ability can be gained.
    For this reason, Dietterich and Bakiri's error-correcting output codes (ECOC)
    are usually chosen. Especially if the coding matrix equals to the unit matrix
    (\ie, each codeword is one basis vector in the high dimensional space),
    no error code could be corrected.
    This is the \emph{one-against-all} or \emph{one-per-class} approach.
    Several coding matrices have been evaluated in \cite{allwein2001reducing}.
    Another family of codes are random codes
    \cite{schapire1997using, guruswami1999multiclass, crammer2002learnability}.
    Compared with fixed codes, random codes are more flexible to explore the relationships
    between different classes.

    %
    %
    AdaBoost.MO employs a predefined coding matrix, such as ECOC.
    An $L$-dimensional weak hypothesis is trained at each iteration,
    with one entry for a binary subproblem.
    The pseudo-code of AdaBoost.MO is given in Algorithm \ref{alg:mo}.
    For a new observation, the label can be predicted by
    \begin{eqnarray}
    y
    \nonumber
    & = &
        \arg \max_{c=1}^C \left\{ M(c,:) \F(\x) \right\}
    \\
    \label{EQ:MO-predict}
    & = &
        \arg \max_c \{ \sum_{j=1}^T \wj M(c,:) \h^{(j)}(\x) \}.
    \end{eqnarray}
    For AdaBoost.ECC, an incremental coding matrix is involved.
    At each iteration, a randomly generated code is added into the matrix,
    which corresponds a new binary subproblem being created.
    AdaBoost.ECC is summarized in Algorithm \ref{alg:ecc}.
    The prediction is implemented by
    \begin{eqnarray}
    y
    \label{EQ:ECC-predict}
    & = &
        \arg \max_c \{ \sum_{j=1}^T \wj M(c,j) h^{(j)}(\x) \}.
    \end{eqnarray}
    Comparing \eqref{EQ:ECC-predict} with \eqref{EQ:MO-predict},
    we can see AdaBoost.ECC is similar to AdaBoost.MO in terms of coding-decoding process.
    Roughly speaking, the coding matrix in ECC is degenerated into a random changing column;
    accordingly, induced binary problems turn to be a single one.
    The relationship between these two algorithms will be completely clear
    after we derive the dual problems in the next section.

\section{Totally corrective multiclass boosting} \label{Sect:AlgsTC}

    In this section we present the $\Lone$ norm regularized optimization problems
    that AdaBoost.MO and AdaBoost.ECC solve,
    and derive the corresponding Lagrange dual problems.
    Based on the column generation technique,
    we design new boosting methods for multiclass learning.
    Unlike conventional boosting, the proposed algorithms are totally corrective.
    For ease of exposition,
    we name our algorithms as \MultiA{} and \MultiB.

\subsection{Lagrange Dual of AdaBoost.MO}
    The loss function that AdaBoost.MO optimizes is proposed in \cite{sun2007unifying}.
    Given a coding strategy $\Yset \rightarrow \M^{C\times L}$,
    the loss function can be written as
    \begin{eqnarray}
    \label{EQ:MO-loss}
        L_{\rm MO} = \sumNL \exp \left( -M(y_i, l) F_l(x_i) \right),
    \end{eqnarray}
    where $F_l(x_i)$ is the $l\Th$ entry in the strong classifier
    and $F_l(x_i) = \sumT \wt h_l^{(t)}(x_i)$.
    For a training example $\x_i$,
    let $\H_l(\x_i) = [h_l^{(1)}(\x_i) \ h_l^{(2)}(\x_i) \cdots h_l^{(T)}(\x_i)]^\T$
    denote the outputs of the $l\Th$ weak hypothesis at all $T$ iterations.
    The boosting process of AdaBoost.MO is equivalent to
    solve the following optimization problem:
    \begin{eqnarray}
      \label{EQ:MO-prim}
        &\min\limits_{\w} &
         \sumNL \exp \left( - M(y_i, l) \H_l^\T(\x_i) \w \right)
        \\
      \label{EQ:MO-prim-cons}
        & \st &
        \w \succeq \0, \wone = \theta.
    \end{eqnarray}
    Notice that the constraint $\wone = \theta$ with $\theta > 0$
    is not explicitly enforced in AdaBoost.MO.
    However, if the variable $\w$ is not bounded,
    one can arbitrarily make the loss function approach zero
    via enlarging $\w$ by an adequately large factor.
    For a convex and monotonically increasing function,
    $\wone = \theta$ is actually equivalent to $\wone \leq \theta$
    since $\w$ always locates at the boundary of the feasibility set.

    By using Lagrange multipliers,
    we are able to derive the Lagrange dual problem
    of any optimization problem \cite{boyd2004convex}.
    If the strong duality holds,
    the optimal dual value is exactly the same as the optimal value of primal problem.

    \begin{theorem}
    \label{TM:Dual-MO}
        The Lagrange dual problem of program \eqref{EQ:MO-prim} is
        \begin{eqnarray}
        \label{EQ:MO-dual}
          & \max\limits_{r,\u}&
           -r\theta
           - \sumNL u_{i,l}\log u_{i,l}
           + \1^\T \u
          \\
        \nonumber
          & \st &
           \sumNL u_{i,l} M(y_i, l) \H_l^\T(\x_i) \preceq r \1^\T,
           \u \succeq \0.
        \end{eqnarray}
    \end{theorem}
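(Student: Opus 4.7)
The plan is to introduce auxiliary variables to decouple the exponential loss from the linear function of $\w$, write down the Lagrangian, and then eliminate the primal variables one block at a time.

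First I would rewrite the primal \eqref{EQ:MO-prim}--\eqref{EQ:MO-prim-cons} in the equivalent form
\begin{align*}
\min_{\w,\,z_{i,l}} &\;\; \sumNL e^{z_{i,l}} \\
\st &\;\; z_{i,l} = -M(y_i,l)\,\H_l^\T(\x_i)\,\w,\\
    &\;\; \1^\T\w = \theta,\;\; \w \succeq \0,
\end{align*}
where the equality $\wone=\theta$ is written as $\1^\T\w=\theta$ using $\w\succeq\0$. Introducing Lagrange multipliers $u_{i,l}\in\mathbb{R}$ for the equality constraints linking $z_{i,l}$ and $\w$, $r\in\mathbb{R}$ for $\1^\T\w=\theta$, and $\bm\mu\succeq\0$ for $\w\succeq\0$, the Lagrangian becomes
\[
\mathcal{L} = \sumNL\bigl(e^{z_{i,l}} - u_{i,l} z_{i,l}\bigr) - \Bigl(\sumNL u_{i,l} M(y_i,l)\H_l(\x_i)\Bigr)^{\!\T}\w + r(\1^\T\w-\theta) - \bm\mu^\T\w.
\]

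Next I would minimize over the primal variables in turn. Since $\mathcal{L}$ decouples in $z_{i,l}$, the inner minimization gives $z_{i,l}^\star = \log u_{i,l}$ whenever $u_{i,l}>0$, yielding the contribution $u_{i,l}-u_{i,l}\log u_{i,l}$; using the convention $0\log 0=0$, the minimization is $-\infty$ unless $u_{i,l}\ge 0$, which forces the sign constraint $\u\succeq\0$ in the dual. The Lagrangian is linear in $\w$, so stationarity in $\w$ demands
\[
r\1 - \sumNL u_{i,l} M(y_i,l)\H_l(\x_i) = \bm\mu \succeq \0,
\]
equivalently $\sumNL u_{i,l} M(y_i,l)\H_l^\T(\x_i) \preceq r\1^\T$; otherwise the infimum in $\w$ is $-\infty$. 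Substituting back eliminates all $\w$-dependence and leaves only the constant $-r\theta$.

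Collecting terms gives the dual objective $-r\theta + \1^\T\u - \sumNL u_{i,l}\log u_{i,l}$ subject to the constraints just derived, which is exactly \eqref{EQ:MO-dual}. Finally I would note that the primal objective is convex and the feasible set $\{\w\succeq\0,\wone=\theta\}$ is a simplex for which Slater's condition is trivially satisfied, so strong duality holds and the dual optimum equals the primal optimum. The main bookkeeping obstacle is simply keeping the signs of the multipliers consistent so that the $z_{i,l}$-minimization yields the negative-entropy term with the correct sign and the $\w\succeq\0$ multipliers translate cleanly into the inequality $\sumNL u_{i,l} M(y_i,l)\H_l^\T(\x_i)\preceq r\1^\T$; the rest is a direct calculation.
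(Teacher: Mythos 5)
Your proposal is correct and follows essentially the same route as the paper: introduce auxiliary variables $z_{i,l}=-M(y_i,l)\H_l^\T(\x_i)\w$ (the paper's $\gamma_{i,l}$), form the Lagrangian, minimize out $z$ via the conjugate of the exponential to obtain the $\1^\T\u-\sumNL u_{i,l}\log u_{i,l}$ terms with $\u\succeq\0$, and force the linear-in-$\w$ coefficient to vanish, which combined with the nonnegativity multiplier yields $\sumNL u_{i,l}M(y_i,l)\H_l^\T(\x_i)\preceq r\1^\T$. Your sign bookkeeping and the Slater/strong-duality remark match the paper's argument, so nothing is missing.
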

    \begin{proof}
        To derive this Lagrange dual,
        one needs to introduce a set of auxiliary variables
        $ \gamma_{i,l} = - M(y_i, l) \H_l^\T(\x_i)\w $,
        $ i = 1,\dots, N $, $ l = 1,\dots, L$.
        Then we can rewrite the primal \eqref{EQ:MO-prim} into
        \begin{eqnarray}
        \label{PF:MO-prim}
            &\min\limits_{\w} &
             \sumNL \exp \gamma_{i,l}
            \\
        \nonumber
            & \st &
             \gamma_{i,l} = - M(y_i, l) \H_l^\T(\x_i)\w,
            \\
        \nonumber
            &&
             \w \succeq \0, \wone = \theta.
        \end{eqnarray}
        The Lagrangian of the above program is
        \begin{eqnarray}
        \nonumber
            L(\w, \gammav, \q, \u, r) =
                \sumNL \exp \gamma_{i,l}
                - \q^\T\w
                + r(\1^\T\w - \theta)
        \\
        \label{PF:MO-Lagr}
                - \sumNL u_{i,k} \left(\gamma_{i,k} + M(y_i, l) \H_l^\T(\x_i)\w \right)
        \end{eqnarray}
        with $\q \succeq \0$.
        The Lagrange dual function is defined as the infimum value of the Lagrangian
        over variables $\w$ and $\gammav$.
        \begin{eqnarray}
        \nonumber
          \inf_{\w, \gammav}\ L
            &=&
              \inf_{\w, \gammav}\
                \sumNL \exp{\gamma_{i,l}}
                - \sumNL u_{i,l}\gamma_{i,l}
                - r\theta
        \\
        \nonumber
            &&
                - \Big(
                    \overbrace{
                        \sumNL u_{i,l}M(y_i, l) \H_l^\T(\x_i)
                            + \q^\T
                            - r\1^\T
                            }^{ {\rm must\ be\ {\bf 0} } }
                  \Big)\w
        \\
        \label{EQ:MO-dsol-0}
            &=&
              \inf_{\gammav}\
                \sumNL \exp{\gamma_{i,l}}
                - \sumNL u_{i,l}\gamma_{i,l}
                - r\theta
        \\
        \nonumber
            &=&
                -\sumNL
                \overbrace{
                    \sup_{\gammav}(u_{i,l}\gamma_{i,l} - \exp{\gamma_{i,l}})
                    }^{  {\rm conjugate\ of\ exp. \ function } }
                - r\theta
        \\
        \label{PF:MO-dufn}
            &=&
                -\sumNL (u_{i,l} \log u_{i,l} - u_{i,l})
                - r\theta.
        \end{eqnarray}
        The convex conjugate (or Fenchel duality) of a function
        $f: \Rset \rightarrow \Rset$ is defined as
        \begin{eqnarray}
        \label{EQ:conjugate}
            g(y) = \sup_{x \in \dom f} (y^\T x - f(x)).
        \end{eqnarray}
        Here we have used the fact that the conjugate
        of the exponential function $f(\gamma) = e^\gamma$ is $g(u) = u\log u - u$,
        if and only if $u \geq 0$. $0\log 0$ is interpreted as $0$.

        For each pair $(\u, r)$, the dual function gives a lower bound
        on the optimal value of the primal problem \eqref{EQ:MO-prim}.
        Through maximizing the dual function, the best bound can be obtained.
        This is exactly the dual problem we derived.
        After eliminating $\q$ and collecting all the constraints,
        we complete the proof.
    \end{proof}

    \comment{
    %
    If we minimize the logarithmic version of the cost function in \eqref{EQ:MO-prim}, \ie,
    \[
        \log \left(     \sumNL \exp \left( - M(y_i, l) \H_l^\T(\x_i) \w \right) 
             \right) ,
    \]
    the optimization problem keeps unchanged because $ \log(\cdot) $ is
    monotonically increasing. 
    The corresponding dual can be written as 
    \begin{eqnarray}
    \label{EQ:MO-dual-min}
        & \min\limits_{r,\u}&
           r \theta
           + \sumNL u_{i,l}\log u_{i,l}
    \\
    \nonumber
        & \st &
           \sumNL u_{i,l} M(y_i, l) h_l^{(j)}(\x_i) \leq r,
           j = 1,\dots, T,
    \\
    \nonumber
        & &
           \u \succeq \0, \1^\T \u = 1.
    \end{eqnarray}
    This change  of $\Lone$ norm term is equivalent to replace the soft constraint
    by a hard constraint \cite{vapnik1998statistical}.
    }

    Since the primal \eqref{EQ:MO-prim} is a convex problem
    and satisfies Slater's condition \cite{boyd2004convex},
    the strong duality holds, which means maximizing the cost function in \eqref{EQ:MO-dual}
    over dual variables $\u$ and $r$ is equivalent to solve the problem \eqref{EQ:MO-prim}.
    Then if a dual optimal solution $(\u^*, r^*)$ is known,
    any primal optimal point is also a minimizer of $L(\w, \gammav, \q, \u^*, r^*)$.
    In other words, if the primal optimal solution $\w^*$ exists,
    it can be obtained by minimizing $L(\w, \gammav, \q, \u^*, r^*)$
    of the following function
    (see \eqref{EQ:MO-dsol-0}):
    \begin{equation}
    \label{EQ:MO-dsol-1}
    \begin{split}
    &   - r^*\theta
        + \sumNL
            \Big(
                \exp \big(- M(y_i, l) \H_l^\T(\x_i)\w \big)
                \\
    & \qquad \qquad \qquad \quad \quad
                - u^*_{i,l} \big(- M(y_i, l) \H_l^\T(\x_i)\w \big)
            \Big).
    \end{split}
    \end{equation}
    We can also use KKT conditions to establish the relationship
    between the primal variables and dual variables \cite{boyd2004convex}.

\subsection{\MultiA: Totally Corrective Boosting based on Column Generation}

    Clearly, we can not obtain the optimal solution of \eqref{EQ:MO-dual}
    until all the weak hypotheses in constraints become available.
    In order to solve this optimization problem,
    we use an optimization technique termed column generation
    \cite{demiriz2002linear}.
    The concept of column generation is adding one constraint at a time
    to the dual problem until an optimal solution is identified.
    In our case, we find the weak classifier at each iteration
    that most violates the constraint in the dual.
    For \eqref{EQ:MO-dual}, such a multidimensional weak classifier
    $\h ^* (\cdot) = [h_1 ^* (\cdot) \cdots h_L ^* (\cdot)]^\T$
    can be found by solving the following problem:
    \begin{equation}
    \label{EQ:MO-CGmc}
        \h ^* (\cdot) =
            \argmax_{ \h (\cdot) } \
                \sum_{l=1}^L \sum_{i=1}^N u_{i,l} M(y_i, l) h_l^{}(\x_i),
    \end{equation}
    which is equivalent to solve $L$ subproblems:
    \begin{equation}
    \label{EQ:MO-CGsc}
        h_l ^* (\cdot) =
            \argmax_{ h_l (\cdot) } \
            \sumN u_{i,l} M(y_i, l)h_l(\x_i), \
        k = 1,\dots, L.
    \end{equation}
    If we view $u_{i,l}$ as the weight of the coded training example
    $\left(\x_i, M(y_i, l)\right)$, $ i = 1,\dots, N$, $ l = 1,\dots, L$,
    this is exactly the same as the strategy AdaBoost.MO uses.
    That is, to find $L$ weak classifiers that produce the smallest
    weighted training error
    (maximum algebraic sum of weights of correctly classified data)
    with respect to the current weight distribution $\u$.

    When a new constraint is added into the dual program,
    the optimal value of this maximization problem \eqref{EQ:MO-dual} would decrease.
    Accordingly the primal optimal value decreases too because of the zero duality gap.
    The primal problem is convex,
    which assures that our optimization problem will converges to the global extremum.
    In practice, \MultiA{} converges quickly on our tested datasets.

    Next we need to find the connection between the primal variables $\w$
    and dual variables $\u$ and $r$.
    According to KKT conditions, since the primal optimal $\w^*$
    minimizes \eqref{EQ:MO-dsol-1} over $\w$,
    its gradient must equals to $0$ at $\w^*$.
    Thus we have
    \begin{equation}
    \label{EQ:MO-KKT}
        u^*_{i,l} = \exp \left(- M(y_i, l) \H_l^\T(\x_i)\w^* \right)
    \end{equation}
    In our experiments, we have used MOSEK \cite{mosek} optimization software,
    which is a primal-dual interior-point solver.
    Both the primal and dual solutions are available at convergence.

\subsection{Lagrange Dual of AdaBoost.ECC and \MultiB}
\label{sec:ecc-dual}

    The primal-dual method is so general,
    actually, arbitrary boosting algorithms based on convex loss functions
    can be integrated into this framework.
    Next we investigate another multiclass boosting algorithm: AdaBoost.ECC.

    Let us denote $\rho_{i,c}[h^{(t)}] = \left( M(y_i, t) - M(c, t) \right) h^{(t)}(\x_i)$.
    If we define the margin of example $(\x_i, y_i)$ on hypothesis $h^{(t)}(\cdot)$ as
    \begin{eqnarray}
    \rho_{i}[h^{(t)}]
    \label{EQ:ECC-marg-t}
        & \triangleq &
            \min_{\cbey} \{ \rho_{i,c}[h^{(t)}] \}
    \\
    \nonumber
        & = &
            M(y_i, t) h^{(t)}(\x_i)
            - \max_{\cbey} \{ M(c, t)h^{(t)}(\x_i) \},
    \end{eqnarray}
    the margin on assembled classifier $\f(\cdot)$ can be computed as
    \begin{eqnarray}
    \rho_{i}[\f]
    \nonumber
        & \triangleq &
            \min_{\cbey} \{ \rho_{i,c}[\f] \}
    \\
    \nonumber
        & = &
            M(y_i, :) \f(\x_i)
            - \max_{\cbey} \{ M(c, :)\f(\x_i) \}
    \\
    \label{EQ:ECC-marg}
        & = &
            \frac{1}{\sum_t \wt} \min_{\cbey} \{ \sumT \wt \rho_{i,c}[h^{(t)}] \}.
    \end{eqnarray}
    AdaBoost.ECC tries to maximize the minimum margin
    by optimizing the following loss function \cite{sun2007unifying}:
    \begin{eqnarray}
    L_{\rm ECC}
        \nonumber
            & = &
                \sumNCR \exp
                \left( -
                    \left( M(y_i, :) - M(c, :) \right)
                    \F(\x_i)
                \right)
        \\
        \label{EQ:ECC-loss}
            & = &
                \sumNCR \exp
                \left( - \sumT \wt \rho_{i,c}[h^{(t)}] \right).
    \end{eqnarray}
    Denote $\P_{i,c} =
    [\rho_{i,c}[h^{(1)}] {\ } \rho_{i,c}[h^{(2)}] \cdots \rho_{i,c}[h^{(T)}] ]^\T $.
    Obviously, $\P_{i,y_i} = \0^\T$ for any example $\x_i$.
    Therefore the problem we are interested in can be equivalently written as:
    \begin{eqnarray}
      \label{EQ:ECC-prim}
        &\min\limits_{\w} &
         \sumNC \exp \left( - \P_{i,c}^\T \w \right)
        \\
      \nonumber
        & \st &
        \w \succeq \0, \wone = \theta.
    \end{eqnarray}
    Like AdaBoost.MO,
    we have added an $\Lone$ norm constraint to remove the scale ambiguity.
    Clearly, this is also a convex problem in $\w$ and strong duality holds.

    \begin{theorem}
    \label{TM:Dual-ECC}
        The Lagrange dual problem of \eqref{EQ:ECC-prim} is
        \begin{eqnarray}
        \label{EQ:ECC-dual}
          & \max\limits_{r,\u}&
           -r\theta
           - \sumNC u_{i,c} \log u_{i,c}
           + \sumNC u_{i,c}
          \\
        \nonumber
          & \st &
           \sumNC u_{i,c} \P_{i,c}^\T \preceq r \1^\T,
           \u \succeq \0.
        \end{eqnarray}
    \end{theorem}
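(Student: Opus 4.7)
The plan is to mirror the proof of Theorem \ref{TM:Dual-MO}, replacing the index pair $(i,l)$ by $(i,c)$ and substituting the margin vector $\P_{i,c}$ for $M(y_i,l)\H_l(\x_i)$, since the primal \eqref{EQ:ECC-prim} has exactly the same structural shape as \eqref{EQ:MO-prim}: an $\Lone$-constrained, nonnegative minimization of a sum of exponentials of linear forms in $\w$.

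First, I would introduce auxiliary variables $\gamma_{i,c} = -\P_{i,c}^\T \w$ for $i=1,\dots,N$ and $c=1,\dots,C$, and rewrite \eqref{EQ:ECC-prim} as the minimization of $\sumNC \exp \gamma_{i,c}$ subject to these defining equalities together with $\w \succeq \0$ and $\wone = \theta$. Attaching Lagrange multipliers $\u$ (for the $\gamma_{i,c}$ equalities), $\q \succeq \0$ (for $\w \succeq \0$), and $r$ (for $\wone = \theta$), the Lagrangian reads
\[
L(\w,\gammav,\q,\u,r)
 = \sumNC \exp \gamma_{i,c}
   - \q^\T \w
   + r(\1^\T \w - \theta)
   - \sumNC u_{i,c}\bigl(\gamma_{i,c} + \P_{i,c}^\T \w\bigr).
\]
The next step is to compute $\inf_{\w,\gammav} L$. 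Because $L$ is linear in $\w$, the infimum is $-\infty$ unless the coefficient of $\w$ vanishes, which forces $\sumNC u_{i,c}\P_{i,c}^\T + \q^\T = r\1^\T$; eliminating $\q \succeq \0$ then yields the dual inequality $\sumNC u_{i,c}\P_{i,c}^\T \preceq r\1^\T$. The infimum over $\gammav$ decouples into $NC$ scalar problems of the form $\sup_{\gamma}\bigl(u_{i,c}\gamma - e^{\gamma}\bigr)$, each being the convex conjugate of the exponential and evaluating to $u_{i,c}\log u_{i,c} - u_{i,c}$ (with $0\log 0 = 0$) provided $u_{i,c}\geq 0$; otherwise the supremum is $+\infty$, which is exactly what justifies the constraint $\u \succeq \0$ in the dual. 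Assembling these pieces reproduces the objective and the constraint set of \eqref{EQ:ECC-dual}.

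Finally, I would close by invoking strong duality: \eqref{EQ:ECC-prim} is convex and any strictly positive $\w$ with $\wone = \theta$ is Slater-feasible, so the dual value equals the primal optimum. The only cosmetic subtlety worth flagging is the treatment of $c = y_i$: since $\P_{i,y_i} = \0$ by construction, those terms contribute the constant $1$ each to the primal cost and the corresponding dual variables $u_{i,y_i}$ enter only through the entropy term (not the linear constraint), so no separate case analysis is needed. I do not anticipate a genuine obstacle here — the derivation is a direct transcription of the AdaBoost.MO argument, with the indexing running over the mislabels $(i,c)$ rather than the code positions $(i,l)$, and the conjugate-of-exponential step being identical.
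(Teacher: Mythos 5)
Your proposal is correct and follows exactly the route the paper intends: the paper's own "proof" of Theorem~\ref{TM:Dual-ECC} is just the observation that the derivation is the same as for Theorem~\ref{TM:Dual-MO} with $(i,l)$ replaced by $(i,c)$ and $M(y_i,l)\H_l(\x_i)$ by $\P_{i,c}$, which is precisely what you carry out. Your handling of the $c=y_i$ terms (contributing the constant $1$ to the primal and leaving $u_{i,y_i}$ unconstrained except through the bounded entropy term, so $u_{i,y_i}\equiv 1$) matches the paper's accompanying remark.
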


    The derivation is very similar
    with that in the proof of Theorem \ref{TM:Dual-MO}.
    Notice that the first constraint has no effect on variables $u_{i,y_i}$
    since $\P_{i,y_i} = \0^\T$, $\forall i = 1,\dots, N$,
    however, the problem is still bounded
    because $-u \log u + u \leq 1$ for all $u \geq 0$.
    Actually, we have $u_{i,y_i} \equiv 1$ in the process of optimization.

    \begin{algorithm}
    \caption{Totally Corrective Multiclass Boosting}
    \label{Alg:MBoostTC}
    \begin{algorithmic}
    \STATE
        \Input\ training data $ (\x_i, y_i)$, $ i =1,\dots, N$; termination\\
        \btab   threshold $ \varepsilon > 0$;
                regularization parameter $\theta > 0$;\\
        \btab   maximum training iterations $T$.
    \STATE
        (1) \Init\\
        \tab   $t = 0$; $\w = 0$; $r = 0$;\\
        \tab   $u_{i,k} = \frac{1}{NK}$, $ i = 1,\dots, N $, $k = 1,\dots, K$. \\
    \WHILE{true}
    \STATE
        (2) Find a new weak classifier $\h^*(\cdot)$ by solving\\
        \tab    subproblem in column generation:\\
        \tab    $\h^* = \argmax_{h} \sum_i \sum_k u_{i,k} \rho_{i,k}[\h]$;
    \STATE
        (3) Check if dual problem is bounded by new constraint:\\
        \tab    \If $\sum_i \sum_k u_{i,k} \rho_{i,k}[\h^*] < r + \epsilon$, \then break;
    \STATE
        (4) Add new constraint to dual problem;
    \STATE
        (5) Solve dual problem to obtain updated $r$ and $\u$:\\
        \tab $\max_{r,\u} \hspace{1mm} -r\theta - \sum_i\sum_k u_{i,k} \log u_{i,k}+\1^\T \u$ \\
        \tab \hspace{1.2mm}  $\st \hspace{5mm} \sum_i\sum_k u_{i,k} \rho_{i,k}[\h^{(j)}] \leq r$,
                             $j = 1,\dots, t$; \\
        \tab \hspace{11mm}  $\u \succeq \0$;
    \STATE
        (6) $t = t + 1$; \If $t > T$, \then break;
    \ENDWHILE
    \STATE
        (7) Calculate the primal variable $\w$ according to dual\\
        \tab    solutions and KKT condition.
    \STATE
        \outpt\ $\F(\cdot) = \sum_{j=1}^t \wj \h^{(j)}(\cdot)$.
    \end{algorithmic}
    \end{algorithm}

    To solve this dual problem,
    we also employ the idea of column generation.
    Hence at $t\Th$ iteration,
    such an optimal weak classifier can be found by
    \begin{equation}
    \label{EQ:ECC-CG1}
        h ^* (\cdot) =
            \argmax_{ h (\cdot) } \
                \sum_{i=1}^N
                    \left( \sum_c
                        u_{i,c} \left( M(y_i, t) - M(c, t) \right)
                    \right) h(\x_i).
    \end{equation}
    Notice that $M(y_i, t)-M(c, t) = 2 M(y_i, t) \ind ( M(y_i, t)$ $\neq M(c,t) )$,
    so if we rewrite \eqref{EQ:ECC-CG1} in the following form:
    \begin{equation}
    \begin{split}
        h ^* (\cdot)
        & =
            \argmax_{ h (\cdot) } \
                \sum_{i = 1}^N
                \bigg(
                    \Big(
                        \sum_c
                        u_{i,c}\ \ind \big( M(y_i, t) \neq M(c, t) \big)
                    \Big) \\
        & \qquad \qquad \qquad \qquad M(y_i, t) h(\x_i)
                \bigg),
    \label{EQ:ECC-CG2}
    \end{split}
    \end{equation}
    it it straightforward to show that we can use the same strategy with AdaBoost.ECC
    to obtain weak classifiers.
    To be more precise,
    the strategy is to minimize the training error with respect to the 
    mislabel distribution.

    Looking at optimization problem \eqref{EQ:MO-dual} and \eqref{EQ:ECC-dual},
    they are quite similar.
    If we denote $\rho_{i,l}[\h^{(t)}] = M(y_i, l)h_l^{(t)}(\x_i)$
    in the first case, the margin of example $(\x_i, y_i)$ on hypothesis $\h^{(t)}$ would be
    $\rho_i[\h^{(t)}] = \min_l \{ \rho_{i,l}[\h^{(t)}] \}$,
    and also
    \begin{eqnarray}
    \rho_i[\f]
    \nonumber
        & = &
            \min_l \{ M(y_i, l)f_l(\x_i) \}
    \\
    \label{EQ:MO-marg}
        & = &
            \frac{1}{\wone} \min_l \{ \sum_t \wt \rho_{i,l}[\h^{(t)}] \}.
    \end{eqnarray}
    Based on different definitions of margin,
    these two problems share exactly the same expression.
    To summarize, we combine the algorithms that we proposed in this section
    and give a general framework for multiclass boosting in Algorithm \ref{Alg:MBoostTC}.

{

\subsection{Hinge Loss Based Multiclass Boosting}
\label{sec:hing}

    Within this framework, we can devise other boosting algorithms
    based on different loss functions.
    Here we give an example.
    According to  \eqref{EQ:MO-predict}, if a pattern $\x_i$ is well classified,
    it should satisfy
    \begin{equation}
    \label{EQ:hin-1}
        M(y_i, :) \F(\x) \geq M(c, :) \F(\x) + 1, \forall c \neq y_i
    \end{equation}
    with $\varphi > 0$. Define the hinge loss for $\x_i$ as
    \begin{equation}
    \label{EQ:hin-loss}
        \xi_i = \max_c \{ M(c, :) \F(\x) + 1 - \dtci \} - M(y_i, :) \F(\x),
    \end{equation}
    where $\dtci = 1$ if $c = y_i$, else $0$.
     That is to say, if $\x_i$ is fully separable,
    then $\xi_i = 0$;
    else it suffers a loss $\xi_i > 0$.
    So the problem we are interested in is to find a classifier
    $\F(\cdot) = [\h^{(1)}(\cdot) \cdots \h^{(T)}(\cdot)]^\T \w$
    through the following optimization:
    \begin{eqnarray}
        \label{EQ:hin-prim}
        &\min \limits_{\xis, \w} &
            \sumN \xi_i
        \\
        \nonumber
        & \st &
            M(c, :) \F(\x) + 1 - \dtci - M(y_i, :) \F(\x) \leq \xi_i, \forall i, c;
        \\
        \nonumber
        &&
            \w \succeq 0;
            \wone = \theta.
    \end{eqnarray}

    \begin{theorem}
    \label{TM:Dual-hinge}
        The equivalent dual problem of \eqref{EQ:hin-prim} is
        \begin{eqnarray}
            \label{EQ:hin-dual}
            &\min \limits_{r, \u} &
                r\theta + \sumNC \dtci u_{i,c}
            \\
            \nonumber
            & \st &
                \sumNC \u_{i,c}
                \left( M(y_i, :) - M(c, :) \right) \h^{(j)}(\x_i) \leq r, \forall j;
            \\
            \nonumber
            &&
                \u \succeq 0;
                \sum_{c=1}^C u_{i,c} = 1, \forall i.
        \end{eqnarray}
    \end{theorem}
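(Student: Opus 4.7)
The plan is to mirror the derivation used for Theorem \ref{TM:Dual-MO}, namely, form the Lagrangian of \eqref{EQ:hin-prim}, take the infimum over the primal variables $\xis$ and $\w$, and read off the dual objective and constraints. Since \eqref{EQ:hin-prim} is a linear program (hence convex) with strictly feasible points whenever $\theta > 0$, Slater's condition holds and strong duality will justify the equivalence in the theorem's statement.

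First I would introduce multipliers $u_{i,c} \geq 0$ for each inequality $M(c,:)\F(\x_i) + 1 - \dtci - M(y_i,:)\F(\x_i) \leq \xi_i$, a scalar $r \in \Rset$ for the equality $\1^\T\w = \theta$, and $\q \succeq \0$ for $\w \succeq \0$. Substituting $\F(\x_i) = \sum_t \wt \h^{(t)}(\x_i)$ and collecting terms, the Lagrangian becomes
\begin{equation*}
\begin{split}
L
&= \sumN \Big(1 - \sum_c u_{i,c}\Big)\xi_i
 + \sumNC u_{i,c}(1 - \dtci) - r\theta \\
&\quad + \sum_{j=1}^T \omega^{(j)} \Big( r - q_j + \sumNC u_{i,c}\bigl(M(c,:) - M(y_i,:)\bigr)\h^{(j)}(\x_i) \Big).
\end{split}
\end{equation*}

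Next I would take $\inf_{\xis, \w} L$. Since $L$ is linear in each $\xi_i$, the infimum is $-\infty$ unless $1 - \sum_c u_{i,c} = 0$, yielding the simplex constraint $\sum_c u_{i,c} = 1$ for every $i$. Similarly, $L$ is linear in each $\omega^{(j)}$, so the infimum is finite only when $r - q_j + \sumNC u_{i,c}(M(c,:) - M(y_i,:))\h^{(j)}(\x_i) = 0$; combined with $q_j \geq 0$ this gives exactly the constraint $\sumNC u_{i,c}(M(y_i,:) - M(c,:))\h^{(j)}(\x_i) \leq r$ for $j = 1,\dots,T$. The multiplier $\q$ can then be eliminated.

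Substituting back leaves the dual function $g(\u,r) = \sumNC u_{i,c}(1 - \dtci) - r\theta = N - \sumNC \dtci u_{i,c} - r\theta$, where I used $\sum_{c} u_{i,c} = 1$ to evaluate $\sumNC u_{i,c}$. Maximizing $g$ is, up to the additive constant $N$, equivalent to the minimization in \eqref{EQ:hin-dual}; rewriting it in the sign convention of the statement completes the derivation. The main bookkeeping hazard is keeping signs of $M(y_i,:) - M(c,:)$ consistent between the primal constraint and the coefficient of $\omega^{(j)}$, and verifying that the $c = y_i$ rows, which contribute the trivial constraint $0 \leq \xi_i$, correctly drop out of the cost after the $\dtci$ subtraction; this is the only step that requires care, and it parallels the observation made for $\P_{i,y_i} = \0^\T$ in Section \ref{sec:ecc-dual}.
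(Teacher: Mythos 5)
Your proposal is correct and follows essentially the same route as the paper, which merely notes that the Lagrangian is linear in $\xis$ and $\w$ and that setting the partial derivatives to zero and substituting back yields the dual; you have simply carried out that computation in full, correctly recovering the simplex constraints $\sum_c u_{i,c}=1$, the margin constraints bounded by $r$, and the objective up to the additive constant $N$ that accounts for the sign flip between max and min. The only cosmetic remark is that for a linear program Slater's condition is not needed — feasibility alone gives strong duality — but this does not affect the argument.
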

    \begin{proof}
        The Lagrangian of this program is a linear function on both $\xis$ and $\w$,
        therefore, the proof is easily done by
        letting the partial derivations on them equal zero,
        and substituting the results back.
    \end{proof}

    Suppose the length of a codeword $M(c, :)$ is $L$.
    Using the idea of column generation,
    we can iteratively obtain weak hypotheses and the associated coefficients
    by solving
    \begin{equation}
    \label{EQ:hin-cg1}
        \h ^* (\cdot) =
            \argmax_{ \h (\cdot) } \
                \sum_{i,c} u_{i,c}
                    \sum_{l = 1}^L \left( M(y_i, t) - M(c, t) \right) \h(\x_i),
    \end{equation}
    or $L$ subproblems
    \begin{equation}
    \label{EQ:hin-cg2}
        h_l ^* (\cdot) =
            \argmax_{ h (\cdot) } \
                \sum_{i=1}^N
                    \left( \sum_c
                        u_{i,c} \left( M(y_i, l) - M(c, l) \right)
                    \right) h(\x_i).
    \end{equation}
    This is exactly the same as  \eqref{EQ:ECC-CG1}. In other words,
    we can follow the same procedures as in AdaBoost.ECC to obtain each entry of weak hypotheses.
}

    The proposed boosting framework may inspire us 
    to design other multiclass boosting algorithms in the primal by considering different 
    coding strategies and loss functions.
    We can use a predefined coding matrix to train a set of multidimensional hypotheses
    as in AdaBoost.MO, at the same time, penalize the mismatched labels as in AdaBoost.ECC.
    It seems to be a mixture of these two algorithms.
    However, this is beyond the scope of this paper.

\subsection{Totally Corrective Update}
\label{sec:tc}

    \begin{figure}[ht!]
    \begin{center}
        \subfigure[]
        {
            \includegraphics[width=0.35\textwidth,clip]{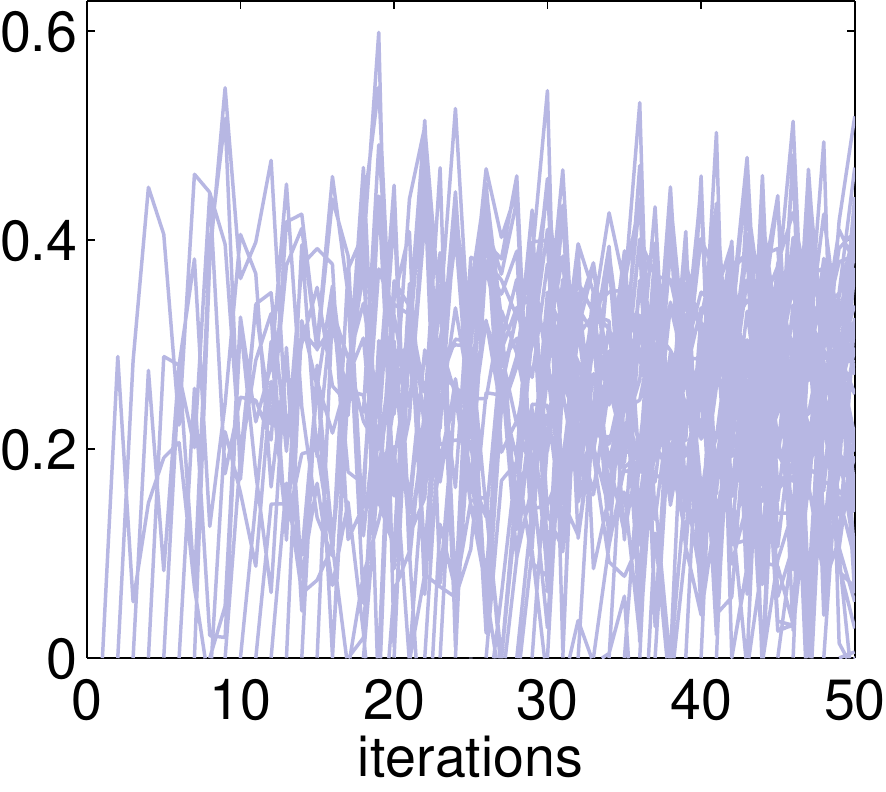}
            \label{fig:abr}
        }
        \subfigure[]
        {
            \includegraphics[width=0.35\textwidth,clip]{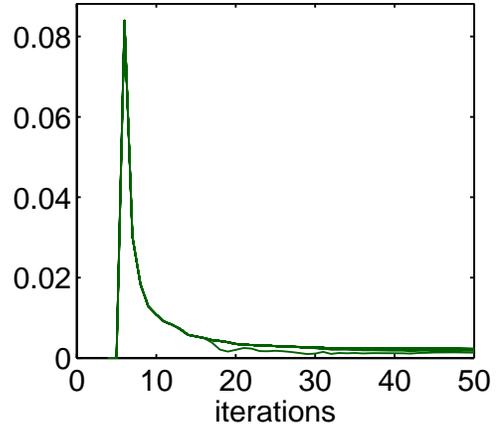}
            \label{fig:cgr}
        }
    \end{center}
    \caption
    {
        Inner products between the new distribution and the past mistake vectors
        $\Fset(t) = \{ {\u^{(t+1)}}^\T \rhos[h^{(j)}] \}_{j = 1}^{50}$, $t \geq j$
        on the training data of \win.
        In (a) AdaBoost.MO, $\Fset(t)$ starts with $\Fset(j) = 0$
        (corrective update) and quickly becomes uncontrollable;
        in (b) \MultiA, $\Fset(t)$ is consistently bounded by $r$.
	}
    \label{fig:r}
    \end{figure}

    Next, we provide an alternative explanation for the new boosting methods.
    In Algorithm \ref{Alg:MBoostTC}, suppose $t$ weak hypotheses have been found
    while the cost function still does not converge
    (\ie, the $t\Th$ constraint is not satisfied).
    To obtain the updated weight distribution $\u$ for the next iteration,
    we need to solve the following optimization problem:
    \begin{eqnarray}
    \label{EQ:TC-dual}
        & \min\limits_{r,\u}&
           r \theta + \sumNK u_{i,k}\log u_{i,k}
    \\
    \label{EQ:TC-r}
        & \st &
           \u^\T \rhos[h^{(j)}] \leq r,
           j = 1,\dots, t,
    \\
    \nonumber
        & &
           \u \succeq \0, \1^\T \u = 1,
    \end{eqnarray}
    where $\u^\T \rhos[h^{(j)}] = \sumNK u_{i,k} \rho_{i,k}[h^{(j)}]$.
    In other words, ${\u^{(t+1)}}^\T \rhos[h^{(j)}] \leq r$ holds for $j = 1,\dots, t$.

    In \cite{kivinen1999boosting}, Kivinen and Warmuth have shown that
    the \emph{corrective} update of weight distribution in standard boosting algorithms
    can be viewed as a solution to the divergence minimization problem
    with the constraint of ${\u^{(t+1)}}^\T \rhos[h^{(t)}] = 0$.
    This means that the new distribution $\u^{(t+1)}$ should be closest with $\u^{(t)}$
    but uncorrelated with the \emph{mistakes} made by the current weak hypothesis.
    If $\u^{(t+1)}$ is further required to be orthogonal to all the past $t$ mistake vectors:
    \begin{equation}
    \label{EQ:TC-0}
        {\u^{(t+1)}}^\T \rhos[h^{(j)}] = 0, \forall j = 1,\dots, t,
    \end{equation}
    the update technique is called a \emph{totally corrective} update.
    At the same time, the previous $t$ hypotheses
    receive an increment in their coefficient respectively.

    Notice that there is not always an exact solution to \eqref{EQ:TC-0}.
    Even if a solution exists, this optimization problem
    might become too complex as $t$ increases.
    Some attempts have been made to obtain an approximate result.
    Kivinen and Warmuth \cite{kivinen1999boosting} suggested using an iterative approach,
    which is actually a column generation based optimization procedure.
    Oza \cite{oza2003boosting} proposed to construct $\u^{(t+1)}$
    by averaging $t+1$ distributions computed from standard AdaBoost update,
    which is in fact the least-squares solution to linear equations
    $[\rhos[h^{(1)}] \cdots \rhos[h^{(t)}]]^\T\u = \0$.
    The stopping criteria of his AveBoost implied a continuous descent
    of inner products ${\u^{(t+1)}}^\T \rhos[h^{(j)}]$
    as in our algorithm, but exhibited in a heuristic way.
    Jiang and Ding \cite{jiang2010partially} also noticed the feasibility problem
    of \eqref{EQ:TC-0}. They proposed to solve a subset, say $m$ equations instead of the entire,
    however, they did not indicate how to choose $m$.

    In our algorithms, all the coefficients $\{ \wj \}_{j=1}^t$
    associated to weak hypotheses are updated at each iteration,
    while the inner products of distribution $\u^{(t+1)}$
    and $\rhos[h^{(j)}]$ are consistently bounded by $r$.
    In this sense, our multiclass boosting learning can be considered as
    a relaxed version of totally corrective algorithm
    with slack variable $r$.
    Figure \ref{fig:r} illustrates the difference between corrective algorithm and our algorithm.
    Intuitively, it is more feasible to solve inequalities \eqref{EQ:TC-r}
    than the same size of equations \eqref{EQ:TC-0},
    although most (but not all, see Fig. \ref{fig:r}) equalities
    in \eqref{EQ:TC-r} are satisfied during the optimization process.
    Analogous relaxation methods have appeared in LPBoost \cite{demiriz2002linear}
    and TotalBoost \cite{warmuth2006totally}.
    In contrast, our algorithm simultaneously restrains
    the distribution divergence and correlations of hypotheses.
    The parameter $\theta$ is a trade-off that controls the balance.
    To our knowledge, this is the first algorithm
    to introduce the totally corrective concept to multiclass boosting learning.

    If we remove the constraints on $\w$ in primal,
    for example, constraints \eqref{EQ:MO-prim-cons},
    the Lagrange dual problem turns to be a totally corrective minimizer of negative entropy:
    \begin{eqnarray}
    \label{EQ:TC-div-min}
        & \min\limits_{r,\u}&
           \sumNK u_{i,k}\log u_{i,k}
    \\
    \nonumber
        & \st &
           \u^\T \rhos[h^{(j)}] = 0,
           j = 1,\dots, t,
    \\
    \nonumber
        & &
           \u \succeq \0, \1^\T \u = 1,
    \end{eqnarray}
    which is similar to the explanation of corrective update in \cite{kivinen1999boosting},
    where the distribution divergence
    is measured by relative entropy (Kullback-Leibler divergence).
    However, the constraints on $\w$ are quite important as we discussed before,
    and should not be simply removed.
    Therefore, it seems reasonable to use a relaxed version,
    instead of standard totally corrective constraints
    for distribution update of boosting learning.

    It has been proven that totally corrective boosting reduces the upper bound
    on training error more aggressively than standard corrective boosting
    (apparently including AdaBoost.MO and AdaBoost.ECC)
    \cite{sochman2004adaboost,jiang2010partially},
    which performs a slowly stage-wise gradient descent procedure on the loss function.
    Thus, our boosting algorithms can be expected
    to be faster in convergence than their counterparts.
    The fast convergence speed is advantageous in reducing the training cost
    and producing a classifier composed of fewer weak hypotheses \cite{shen2010dual,sochman2004adaboost}.
    Further, a simplification in strong classifier leads to a speedup of classification,
    which is critical to many applications, especially those with real-time requirements.

    \setlength{\tabcolsep}{8pt}
    \begin{table}[!t]
    \centering
    \caption{Multiclass UCI datasets}
    \label{table:datasets}
    \begin{tabular}{l|r|r|r|r}
    \hline
        dataset & \#train & \#test & \#attribute & \#class \\
    \hline
        svmguide2 & 391   & -    & 20 & 3  \\
        svmguide4 & 300   & 312  & 10 & 6  \\
        wine      & 178   & -    & 13 & 3  \\
        iris      & 150   & -    & 4  & 3  \\
        glass     & 214   & -    & 9  & 6  \\
        thyroid   & 3772  & 3428 & 21 & 3  \\
        dna       & 2000  & 1186 & 180 & 3 \\
        vehicle   & 846   & -    & 18 & 4  \\
    \hline
    \end{tabular}
    \end{table}
    \setlength{\tabcolsep}{1.4pt}

    \setlength{\tabcolsep}{8pt} 
    \begin{table*}[!ht]
    \centering
    \caption{
        Training and test errors(including mean and standard deviation)
        of AdaBoost.MO, \MultiA, AdaBoost.ECC and \MultiB{} on UCI datasets.
        Average results of 20 repeated tests are reported.
        Base learners are decision stumps.
        Results in bold are better than their counterparts.
    }
    \label{table:dstump}
    \begin{tabular}{llllllll}
    \hline
    dataset & algorithm
    		& train error 50	& train error 100	& train error 500
    		& test error 50	& test error 100	& test error 500	\\
    \hline
    	svmguide2
    	    &AB.MO	
    			& 0.016$\pm$0.005 	
    			& 0.000$\pm$0.000 	
    			& 0$\pm$0		
    			& 0.225$\pm$0.031 	
    			& \textbf{0.212$\pm$0.030 	}
    			& \textbf{0.229$\pm$0.024 	} \\
    	    &TC.MO	
    			& \textbf{0.011$\pm$0.003 	}
    			& \textbf{0$\pm$0		}
    			& 0$\pm$0		
    			& \textbf{0.224$\pm$0.024 	}
    			& 0.221$\pm$0.038 	
    			& 0.233$\pm$0.028 	 \\
    	\cline{2-8}
    	    &AB.ECC	
    			& 0.049$\pm$0.009 	
    			& 0.003$\pm$0.003 	
    			& 0$\pm$0		
    			& 0.253$\pm$0.032 	
    			& 0.226$\pm$0.031 	
    			& 0.226$\pm$0.030 	 \\
    	    &TC.ECC	
    			& \textbf{0.032$\pm$0.009 	}
    			& \textbf{0.001$\pm$0.002 	}
    			& 0$\pm$0		
    			& \textbf{0.242$\pm$0.031 	}
    			& \textbf{0.223$\pm$0.034 	}
    			& \textbf{0.221$\pm$0.028 	} \\
    \hline
    	svmguide4
    	    &AB.MO	
    			& 0.041$\pm$0.004 	
    			& 0.016$\pm$0.002 	
    			& 0$\pm$0		
    			& 0.204$\pm$0.025 	
    			& \textbf{0.194$\pm$0.017 	}
    			& \textbf{0.190$\pm$0.017 	} \\
    	    &TC.MO	
    			& \textbf{0.039$\pm$0.004 	}
    			& \textbf{0.014$\pm$0.002 	}
    			& 0$\pm$0		
    			& \textbf{0.203$\pm$0.023 	}
    			& 0.196$\pm$0.016 	
    			& 0.193$\pm$0.018 	 \\
    	\cline{2-8}
    	    &AB.ECC	
    			& 0.180$\pm$0.021 	
    			& 0.115$\pm$0.023 	
    			& 0$\pm$0		
    			& 0.285$\pm$0.023 	
    			& 0.262$\pm$0.025 	
    			& \textbf{0.233$\pm$0.023 	} \\
    	    &TC.ECC	
    			& \textbf{0.158$\pm$0.023 	}
    			& \textbf{0.087$\pm$0.020 	}
    			& 0$\pm$0		
    			& \textbf{0.275$\pm$0.022 	}
    			& \textbf{0.245$\pm$0.028 	}
    			& 0.237$\pm$0.022 	 \\
    \hline
    	wine
    	    &AB.MO	
    			& 0$\pm$0		
    			& 0$\pm$0		
    			& 0$\pm$0		
    			& 0.032$\pm$0.015 	
    			& \textbf{0.030$\pm$0.029 	}
    			& \textbf{0.031$\pm$0.028 	} \\
    	    &TC.MO	
    			& 0$\pm$0		
    			& 0$\pm$0		
    			& 0$\pm$0		
    			& 0.032$\pm$0.016 	
    			& 0.031$\pm$0.023 	
    			& 0.032$\pm$0.026 	 \\
    	\cline{2-8}
    	    &AB.ECC	
    			& 0$\pm$0		
    			& 0$\pm$0		
    			& 0$\pm$0		
    			& 0.026$\pm$0.020 	
    			& 0.032$\pm$0.015 	
    			& 0.026$\pm$0.024 	 \\
    	    &TC.ECC	
    			& 0$\pm$0		
    			& 0$\pm$0		
    			& 0$\pm$0		
    			& 0.026$\pm$0.022 	
    			& 0.032$\pm$0.034 	
    			& 0.026$\pm$0.019 	 \\
    \hline
    	iris
    	    &AB.MO	
    			& 0.000$\pm$0.001 	
    			& 0$\pm$0		
    			& 0$\pm$0		
    			& 0.062$\pm$0.026 	
    			& \textbf{0.064$\pm$0.025 	}
    			& 0.060$\pm$0.032 	 \\
    	    &TC.MO	
    			& \textbf{0$\pm$0		}
    			& 0$\pm$0		
    			& 0$\pm$0		
    			& \textbf{0.062$\pm$0.026 	}
    			& 0.067$\pm$0.023 	
    			& \textbf{0.057$\pm$0.025 	} \\
    	\cline{2-8}
    	    &AB.ECC	
    			& 0$\pm$0		
    			& 0$\pm$0		
    			& 0$\pm$0		
    			& \textbf{0.057$\pm$0.024 	}
    			& \textbf{0.062$\pm$0.026 	}
    			& 0.053$\pm$0.032 	 \\
    	    &TC.ECC	
    			& 0$\pm$0		
    			& 0$\pm$0		
    			& 0$\pm$0		
    			& 0.061$\pm$0.021 	
    			& 0.067$\pm$0.020 	
    			& \textbf{0.051$\pm$0.028 	} \\
    \hline
    	glass
    	    &AB.MO	
    			& 0.026$\pm$0.003 	
    			& 0.003$\pm$0.001 	
    			& 0$\pm$0		
    			& \textbf{0.275$\pm$0.034 	}
    			& \textbf{0.246$\pm$0.061 	}
    			& \textbf{0.268$\pm$0.051 	} \\
    	    &TC.MO	
    			& \textbf{0.022$\pm$0.003 	}
    			& \textbf{0.002$\pm$0.001 	}
    			& 0$\pm$0		
    			& 0.280$\pm$0.039 	
    			& 0.252$\pm$0.059 	
    			& 0.273$\pm$0.047 	 \\
    	\cline{2-8}
    	    &AB.ECC	
    			& 0.168$\pm$0.032 	
    			& 0.078$\pm$0.018 	
    			& 0$\pm$0		
    			& 0.352$\pm$0.052 	
    			& 0.313$\pm$0.043 	
    			& \textbf{0.298$\pm$0.044 	} \\
    	    &TC.ECC	
    			& \textbf{0.113$\pm$0.030 	}
    			& \textbf{0.020$\pm$0.013 	}
    			& 0$\pm$0		
    			& \textbf{0.327$\pm$0.048 	}
    			& \textbf{0.302$\pm$0.035 	}
    			& 0.306$\pm$0.045 	 \\
    \specialrule{1pt}{0pt}{0pt}
    	thyroid
    	    &AB.MO	
    			& 0.003$\pm$0.001 	
    			& 0.001$\pm$0.000 	
    			& 0$\pm$0		
    			& 0.006$\pm$0.001 	
    			& \textbf{0.006$\pm$0.001 	}
    			& \textbf{0.006$\pm$0.002 	} \\
    	    &TC.MO	
    			& \textbf{0.001$\pm$0.001 	}
    			& \textbf{0.001$\pm$0.001 	}
    			& 0$\pm$0		
    			& \textbf{0.006$\pm$0.001 	}
    			& 0.006$\pm$0.001 	
    			& 0.006$\pm$0.001 	 \\
    	\cline{2-8}
    	    &AB.ECC	
    			& 0.006$\pm$0.001 	
    			& 0.002$\pm$0.001 	
    			& 0$\pm$0		
    			& 0.010$\pm$0.002 	
    			& 0.008$\pm$0.002 	
    			& 0.005$\pm$0.001 	 \\
    	    &TC.ECC	
    			& \textbf{0.000$\pm$0.001 	}
    			& \textbf{0$\pm$0		}
    			& 0$\pm$0		
    			& \textbf{0.006$\pm$0.002 	}
    			& \textbf{0.006$\pm$0.002 	}
    			& \textbf{0.004$\pm$0.000 	} \\
    \hline
    	dna
    	    &AB.MO	
    			& 0.053$\pm$0.002 	
    			& 0.040$\pm$0.002 	
    			& \textbf{0.028$\pm$0.002 	}
    			& \textbf{0.076$\pm$0.007 	}
    			& 0.066$\pm$0.006 	
    			& 0.061$\pm$0.006 	 \\
    	    &TC.MO	
    			& \textbf{0.052$\pm$0.004 	}
    			& \textbf{0.039$\pm$0.003 	}
    			& 0.029$\pm$0.002 	
    			& 0.078$\pm$0.007 	
    			& \textbf{0.064$\pm$0.006 	}
    			& \textbf{0.054$\pm$0.005 	} \\
    	\cline{2-8}
    	    &AB.ECC	
    			& 0.070$\pm$0.004 	
    			& 0.049$\pm$0.005 	
    			& \textbf{0.017$\pm$0.004 	}
    			& 0.089$\pm$0.008 	
    			& 0.077$\pm$0.009 	
    			& 0.069$\pm$0.005 	 \\
    	    &TC.ECC	
    			& \textbf{0.059$\pm$0.005 	}
    			& \textbf{0.041$\pm$0.006 	}
    			& 0.028$\pm$0.005 	
    			& \textbf{0.083$\pm$0.008 	}
    			& \textbf{0.070$\pm$0.006 	}
    			& \textbf{0.065$\pm$0.004 	} \\
    \hline
    	vehicle
    	    &AB.MO	
    			& 0.099$\pm$0.003 	
    			& 0.073$\pm$0.003 	
    			& \textbf{0.018$\pm$0.000 	}
    			& 0.249$\pm$0.020 	
    			& 0.245$\pm$0.019 	
    			& 0.212$\pm$0.010 	 \\
    	    &TC.MO	
    			& \textbf{0.086$\pm$0.009 	}
    			& \textbf{0.048$\pm$0.007 	}
    			& 0.019$\pm$0.027 	
    			& \textbf{0.241$\pm$0.020 	}
    			& \textbf{0.231$\pm$0.018 	}
    			& \textbf{0.211$\pm$0.021 	} \\
    	\cline{2-8}
    	    &AB.ECC	
    			& 0.271$\pm$0.010 	
    			& 0.207$\pm$0.011 	
    			& 0.096$\pm$0.010 	
    			& 0.359$\pm$0.022 	
    			& 0.300$\pm$0.021 	
    			& \textbf{0.249$\pm$0.017 	} \\
    	    &TC.ECC	
    			& \textbf{0.208$\pm$0.018 	}
    			& \textbf{0.140$\pm$0.019 	}
    			& \textbf{0.055$\pm$0.011 	}
    			& \textbf{0.327$\pm$0.022 	}
    			& \textbf{0.287$\pm$0.024 	}
    			& 0.257$\pm$0.028 	 \\
    \hline
    \end{tabular}
    \end{table*}
    \setlength{\tabcolsep}{1.4pt}

\section{Experiments} \label{Sect:Exps}

    In this section, we perform several experiments
    to compare our totally corrective multiclass boosting algorithms with previous work,
    including \MultiA{} against its stage-wise counterpart AdaBoost.MO,
    and \MultiB{} against its stage-wise counterpart AdaBoost.ECC.
    For \MultiA{} and AdaBoost.MO, we design error-correcting outputs codes
    (ECOC) \cite{dietterich1995solving} as the coding matrix.
    For our new algorithms, we solve the dual optimization problems
    using the off-the-shelf MOSEK package \cite{mosek}.

    The datasets used in our experiments are collected from UCI repository \cite{blake-uci}.
    A summary is listed in Table \ref{table:datasets}.
    We preprocess these datasets as follows:
    if it is provided with a pre-specified test set,
    the partitioning setup is retained,
    otherwise $70\%$ samples are used for training and the other $30\%$ for test.
    On each test, these two sets are merged and rebuilt by random selecting examples.
    To keep the balance of multiclass problems,
    examples associated with the same class are carefully split in proportion.
    The boosting algorithms are conducted on new sets.
    This procedure is repeated $20$ times.
    We report the average value as the experimental result.

    In the first experiment, we choose decision stumps as the weak learners.
    As a binary classifier, decision stump is extensively used due to its simplicity.
    The parameters are preset as follows.
    The maximum number of training iterations is set to $50$, $100$ and $500$.
    %
    %
    An important parameter to be tuned is the regularization parameter $\theta$,
    which equals to the $\Lone$ norm of coefficient vector associated with weak hypotheses.
    A simple method to choose $\theta$ is running the corresponding stage-wise algorithms
    on the same data and then computing the algebraic sum: $\theta = \sum_j \wj$.
    For datasets \svt, \svf, \win, \iri{} and \gla,
    which contain a small number of examples, we use this method. 
    The same strategy has been used in \cite{shen2010dual} to test {\em binary} totally corrective
    boosting.  
    
    For the others, we choose $\theta$
    from $\{ 2,5,8,$ $10,12,15,20,30,$ $ 40,45,$ $60,80,$ $100,120,$ $150,200 \}$
    by running a five-fold cross validation on training data.
    In particular, we use a pseudo-random code generator in the cross validations
    of AdaBoost.ECC and \MultiB, to make sure each candidate parameter is tested
    under the same coding strategy.

    The experimental results are reported in Table \ref{table:dstump}.
    As we can see, almost all the training errors of totally corrective algorithms are lower
    than their counterparts, except in the case both algorithms have converged to $0$.
    In Fig. \ref{fig:train}, we show the training error curves of some datasets
    when the training iteration number is set to $500$.
    Obviously, the convergence speed of our totally corrective boosting
    is much faster than the stage-wise one.
    This conclusion is consistent with the discussion in Section \ref{sec:tc}.
    Especially on \svt, \iri{} and \gla,
    new algorithms are around $50$ iterations faster than their counterparts.

    In terms of test error, it is not apparent which algorithm is better.
    Empirically speaking, the totally corrective boosting
    has a comparable generalization capability with the stage-wise version.
    It is noticeable that on \thy, \dna{} and \veh,
    where the regularization parameter is adjusted via cross-validation,
    our algorithms clearly outperform their counterparts.
    We conjecture that if we tune this parameter more carefully,
    the performance of new algorithms could be further improved.

    In the second experiment, we change the base learner with another binary classifier:
    Fisher's linear discriminant function (LDA).
    For simplicity, we only run AdaBoost.ECC and \MultiB{} at this time.
    All the parameters and settings are the same as in the first experiment.
    The results are reported in Table \ref{table:lda}.
    Again, the convergence speed of totally corrective boosting is faster
    than gradient descent version.
    We also notice that two algorithms of ECCs are better with LDAs
    than with decision stumps on \svt,
    but worse on \svf, \gla, \thy{} and \veh,
    although LDA is evidently stronger than decision stump.

    \setlength{\tabcolsep}{8pt} 
    \begin{table*}[!ht]
    \centering
    \caption{
        Training and test errors (including mean and standard deviation)
        of AdaBoost.ECC and \MultiB.
        The average results of 20 repeated tests are reported.
        Base learners are Fisher's linear discriminant functions.
    }
    \label{table:lda}
    \begin{tabular}{llllllll}
    \hline
    dataset & algorithm
    		& train error 50	& train error 100	& train error 500
    		& test error 50	& test error 100	& test error 500	\\
    \hline
    	svmguide2
    	    &AB.ECC	
    			& 0.055$\pm$0.021 	
    			& 0.003$\pm$0.006 	
    			& 0$\pm$0		
    			& \textbf{0.214$\pm$0.033 	}
    			& \textbf{0.224$\pm$0.025 	}
    			& \textbf{0.197$\pm$0.021 	} \\
    	    &TC.ECC	
    			& \textbf{0.031$\pm$0.013 	}
    			& \textbf{0$\pm$0		}
    			& 0$\pm$0		
    			& 0.228$\pm$0.044 	
    			& 0.229$\pm$0.024 	
    			& 0.221$\pm$0.027 	 \\
    \hline
    	svmguide4
    	    &AB.ECC	
    			& 0.323$\pm$0.038 	
    			& 0.208$\pm$0.045 	
    			& 0.000$\pm$0.001 	
    			& 0.457$\pm$0.036 	
    			& 0.428$\pm$0.049 	
    			& \textbf{0.328$\pm$0.037 	} \\
    	    &TC.ECC	
    			& \textbf{0.275$\pm$0.035 	}
    			& \textbf{0.149$\pm$0.033 	}
    			& \textbf{0$\pm$0		}
    			& \textbf{0.418$\pm$0.041 	}
    			& \textbf{0.377$\pm$0.051 	}
    			& 0.338$\pm$0.048 	 \\
    \hline
    	wine
    	    &AB.ECC	
    			& 0$\pm$0		
    			& 0$\pm$0		
    			& 0$\pm$0		
    			& 0.038$\pm$0.025 	
    			& 0.027$\pm$0.025 	
    			& 0.030$\pm$0.021 	 \\
    	    &TC.ECC	
    			& 0$\pm$0		
    			& 0$\pm$0		
    			& 0$\pm$0		
    			& \textbf{0.025$\pm$0.027 	}
    			& \textbf{0.020$\pm$0.022 	}
    			& \textbf{0.015$\pm$0.014 	} \\
    \hline
    	iris
    	    &AB.ECC	
    			& 0$\pm$0		
    			& 0$\pm$0		
    			& 0$\pm$0		
    			& \textbf{0.041$\pm$0.030 	}
    			& \textbf{0.040$\pm$0.024 	}
    			& 0.056$\pm$0.038 	 \\
    	    &TC.ECC	
    			& 0$\pm$0		
    			& 0$\pm$0		
    			& 0$\pm$0		
    			& 0.046$\pm$0.037 	
    			& 0.042$\pm$0.026 	
    			& \textbf{0.049$\pm$0.038 	} \\
    \hline
    	glass
    	    &AB.ECC	
    			& 0.194$\pm$0.033 	
    			& 0.080$\pm$0.021 	
    			& 0$\pm$0		
    			& 0.384$\pm$0.051 	
    			& \textbf{0.357$\pm$0.051 	}
    			& 0.369$\pm$0.052 	 \\
    	    &TC.ECC	
    			& \textbf{0.077$\pm$0.028 	}
    			& \textbf{0.001$\pm$0.002 	}
    			& 0$\pm$0		
    			& \textbf{0.374$\pm$0.046 	}
    			& 0.364$\pm$0.047 	
    			& \textbf{0.359$\pm$0.058 	} \\
    \specialrule{1pt}{0pt}{0pt} 
    	thyroid
    	    &AB.ECC	
    			& 0.041$\pm$0.004 	
    			& 0.035$\pm$0.005 	
    			& 0.001$\pm$0.001 	
    			& 0.048$\pm$0.006 	
    			& 0.046$\pm$0.004 	
    			& 0.032$\pm$0.004 	 \\
    	    &TC.ECC	
    			& \textbf{0.033$\pm$0.006 	}
    			& \textbf{0.018$\pm$0.010 	}
    			& \textbf{0$\pm$0		}
    			& \textbf{0.043$\pm$0.007 	}
    			& \textbf{0.040$\pm$0.004 	}
    			& \textbf{0.030$\pm$0.001 	} \\
    \hline
    	dna
    	    &AB.ECC	
    			& 0.000$\pm$0.000 	
    			& 0.000$\pm$0.000 	
    			& 0.000$\pm$0.000 	
    			& 0.081$\pm$0.007 	
    			& 0.084$\pm$0.009 	
    			& 0.077$\pm$0.010 	 \\
    	    &TC.ECC	
    			& 0.000$\pm$0.000 	
    			& 0.000$\pm$0.000 	
    			& \textbf{0.000$\pm$0.000 	}
    			& \textbf{0.068$\pm$0.008 	}
    			& \textbf{0.065$\pm$0.007 	}
    			& \textbf{0.064$\pm$0.008 	} \\
    \hline
    	vehicle
    	    &AB.ECC	
    			& 0.237$\pm$0.012 	
    			& 0.176$\pm$0.018 	
    			& 0.004$\pm$0.002 	
    			& \textbf{0.301$\pm$0.017 	}
    			& \textbf{0.297$\pm$0.021 	}
    			& 0.276$\pm$0.026 	 \\
    	    &TC.ECC	
    			& \textbf{0.196$\pm$0.015 	}
    			& \textbf{0.125$\pm$0.011 	}
    			& \textbf{0$\pm$0		}
    			& 0.312$\pm$0.029 	
    			& 0.313$\pm$0.027 	
    			& \textbf{0.272$\pm$0.037 	} \\
    \hline
    \end{tabular}
    \end{table*}
    \setlength{\tabcolsep}{1.4pt}

    \begin{figure*}[ht!]
    \begin{center}
        \subfigure[]
        {
            \includegraphics[width=0.28\textwidth,clip]{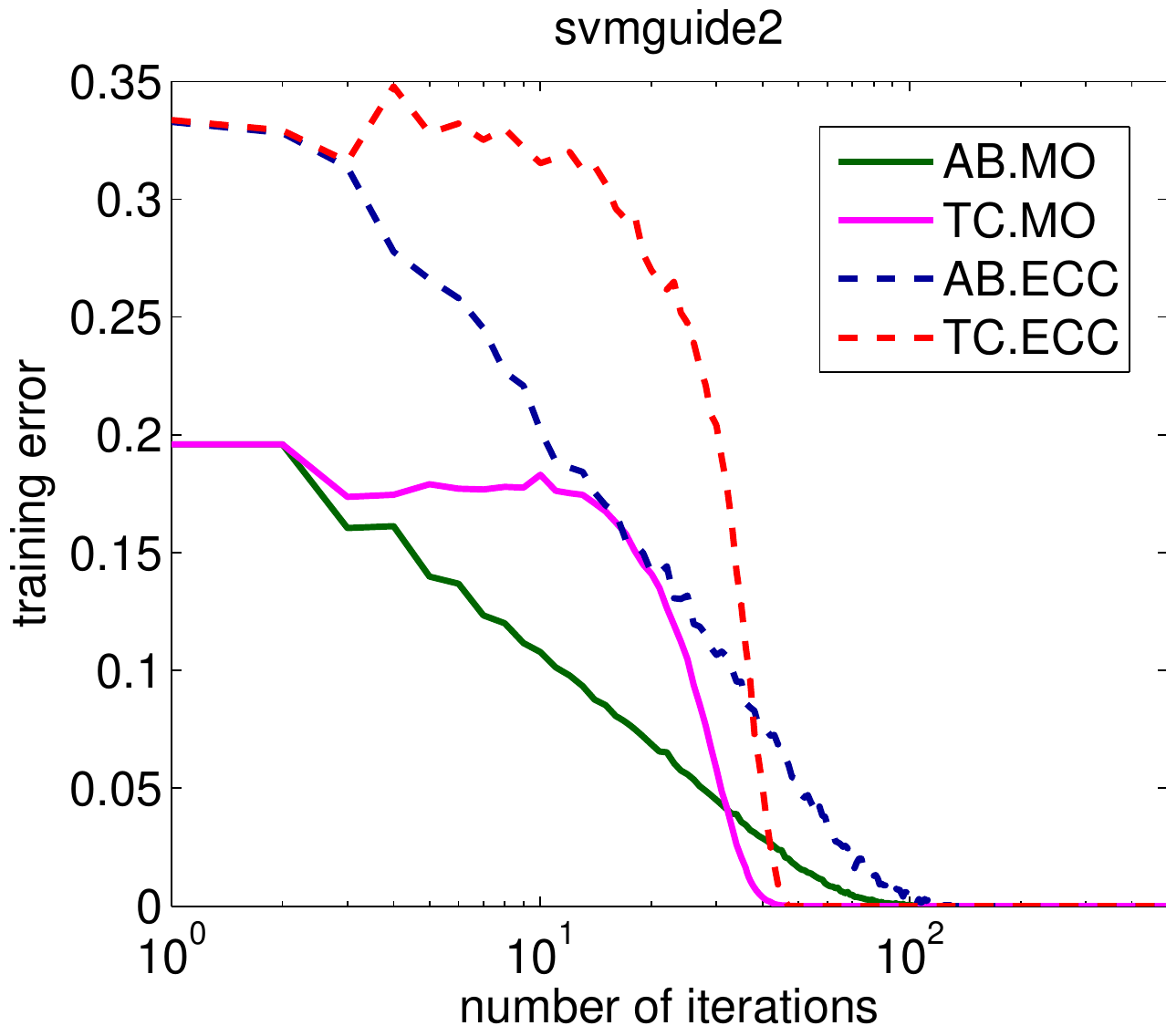}
            \label{fig:train1}
        }
        \subfigure[]
        {
            \includegraphics[width=0.28\textwidth,clip]{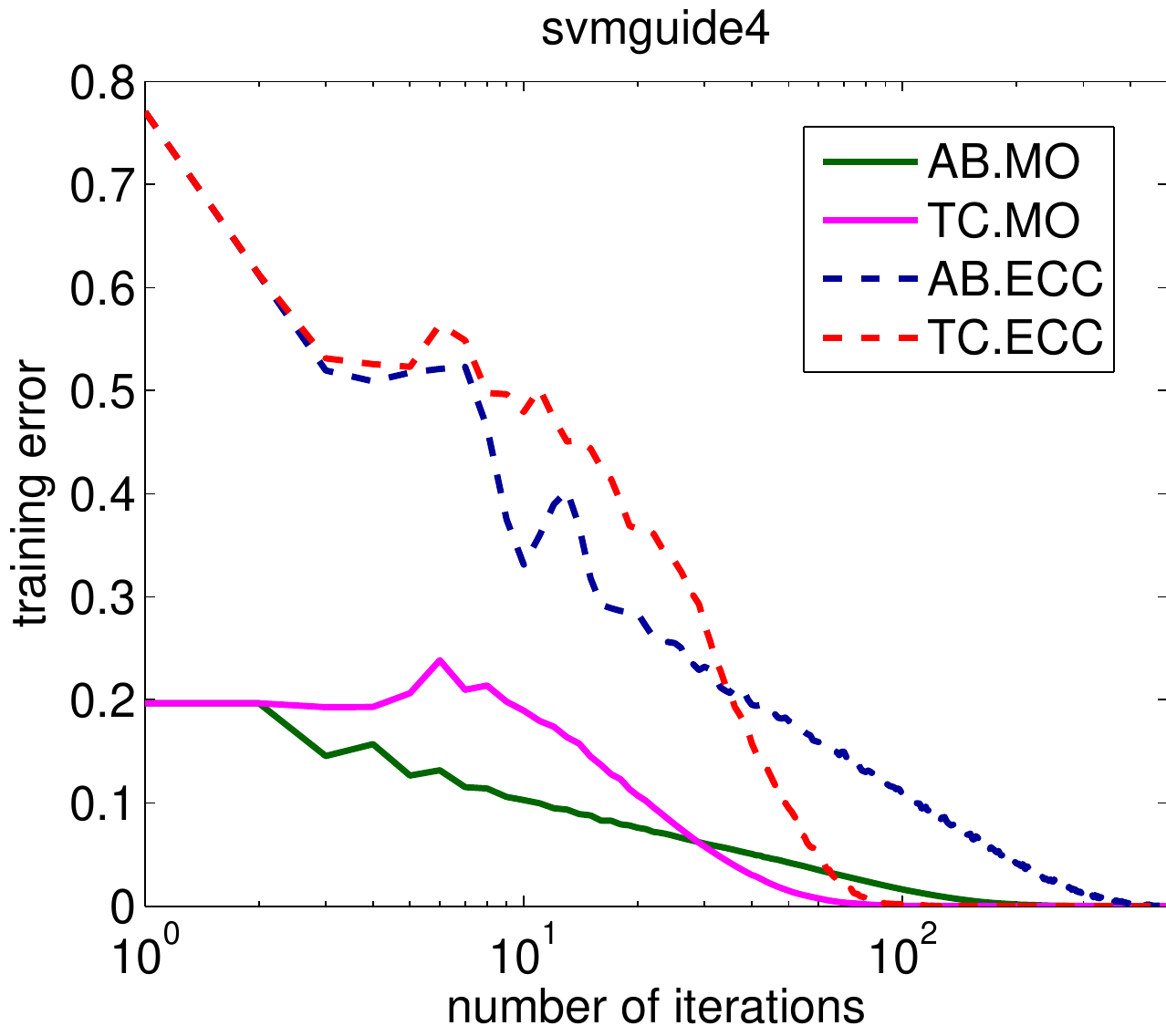}
            \label{fig:train2}
        }
        \subfigure[]
        {
            \includegraphics[width=0.28\textwidth,clip]{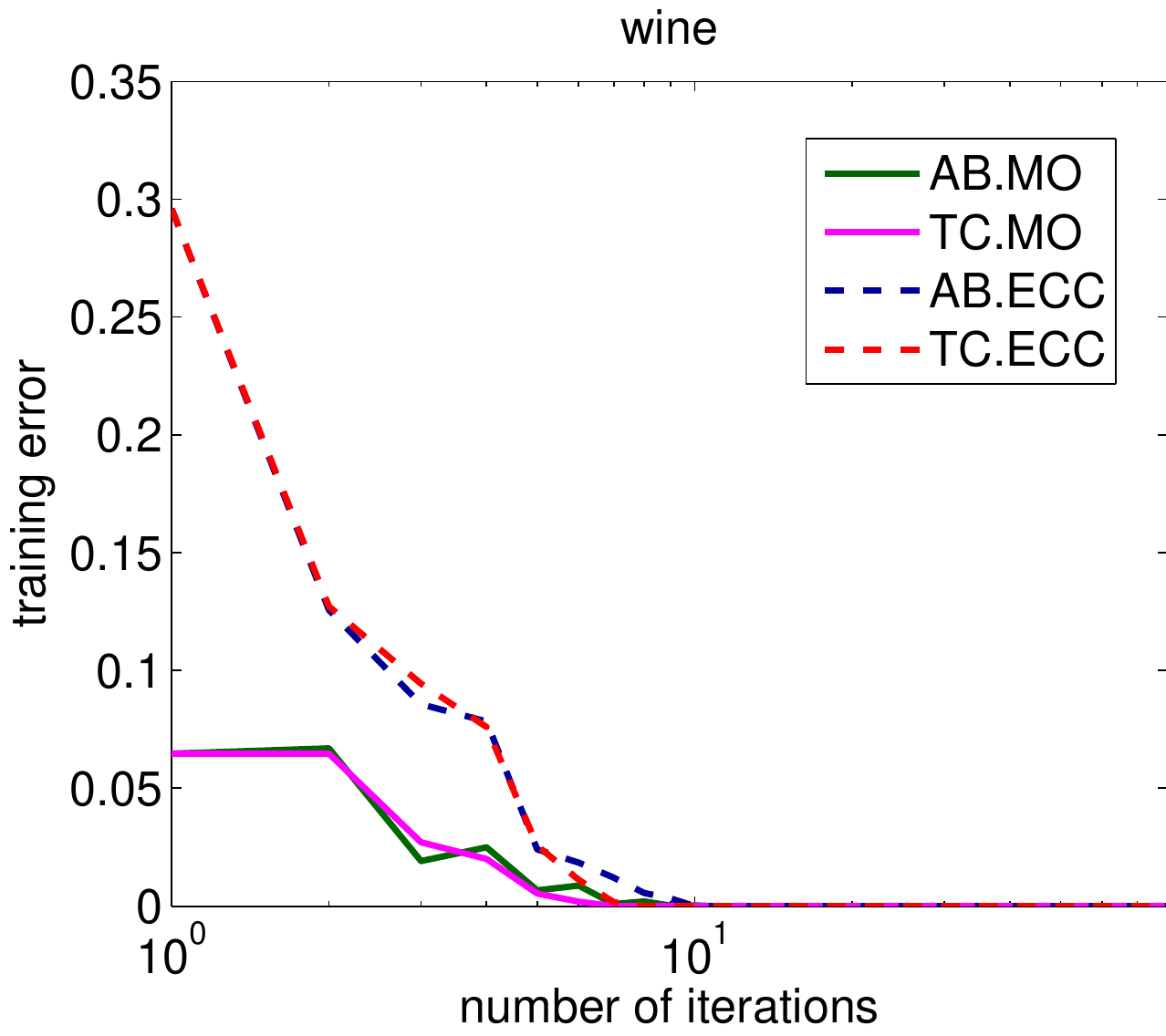}
            \label{fig:train3}
        }
        \subfigure[]
        {
            \includegraphics[width=0.28\textwidth,clip]{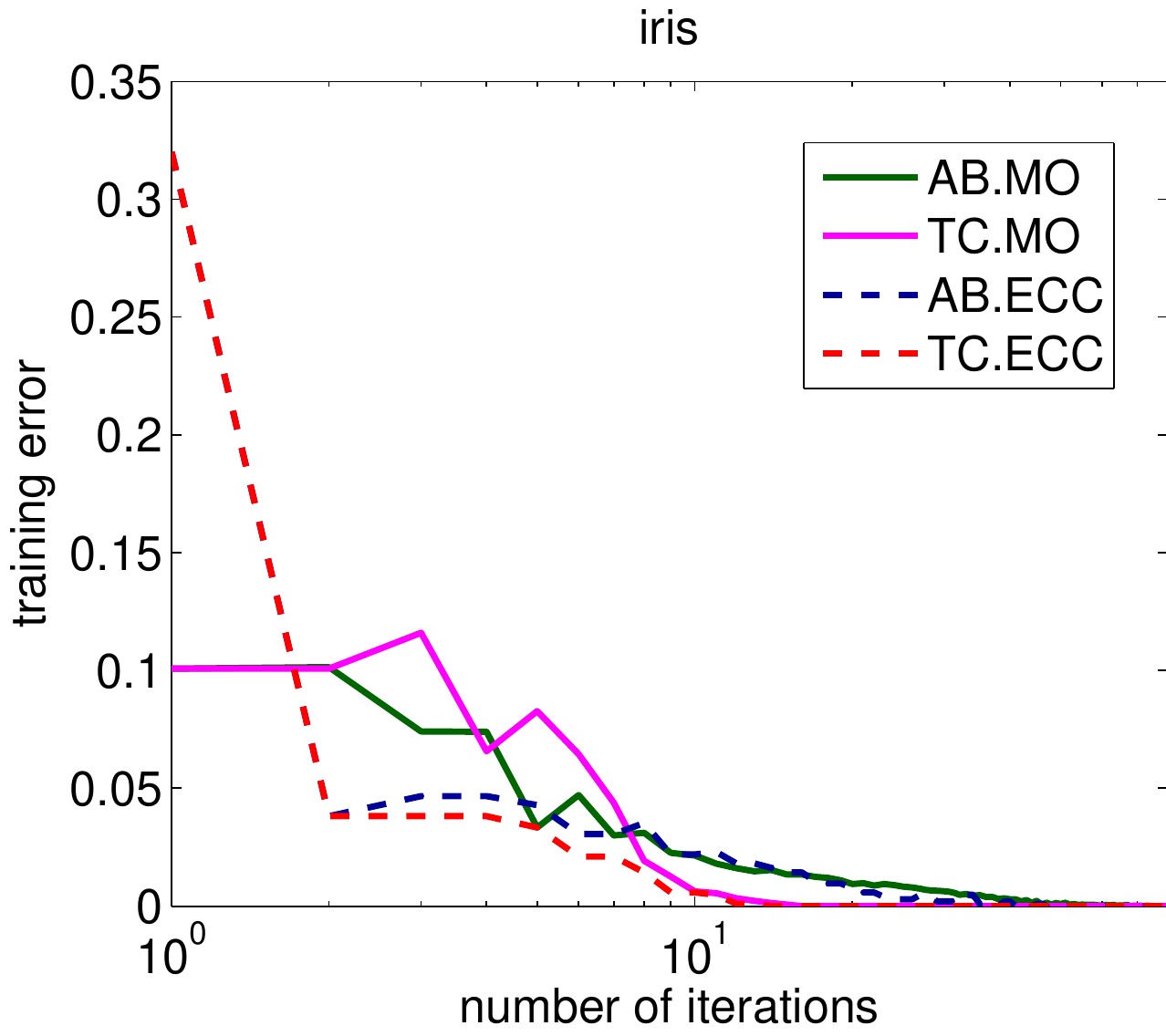}
            \label{fig:train4}
        }
        \subfigure[]
        {
            \includegraphics[width=0.28\textwidth,clip]{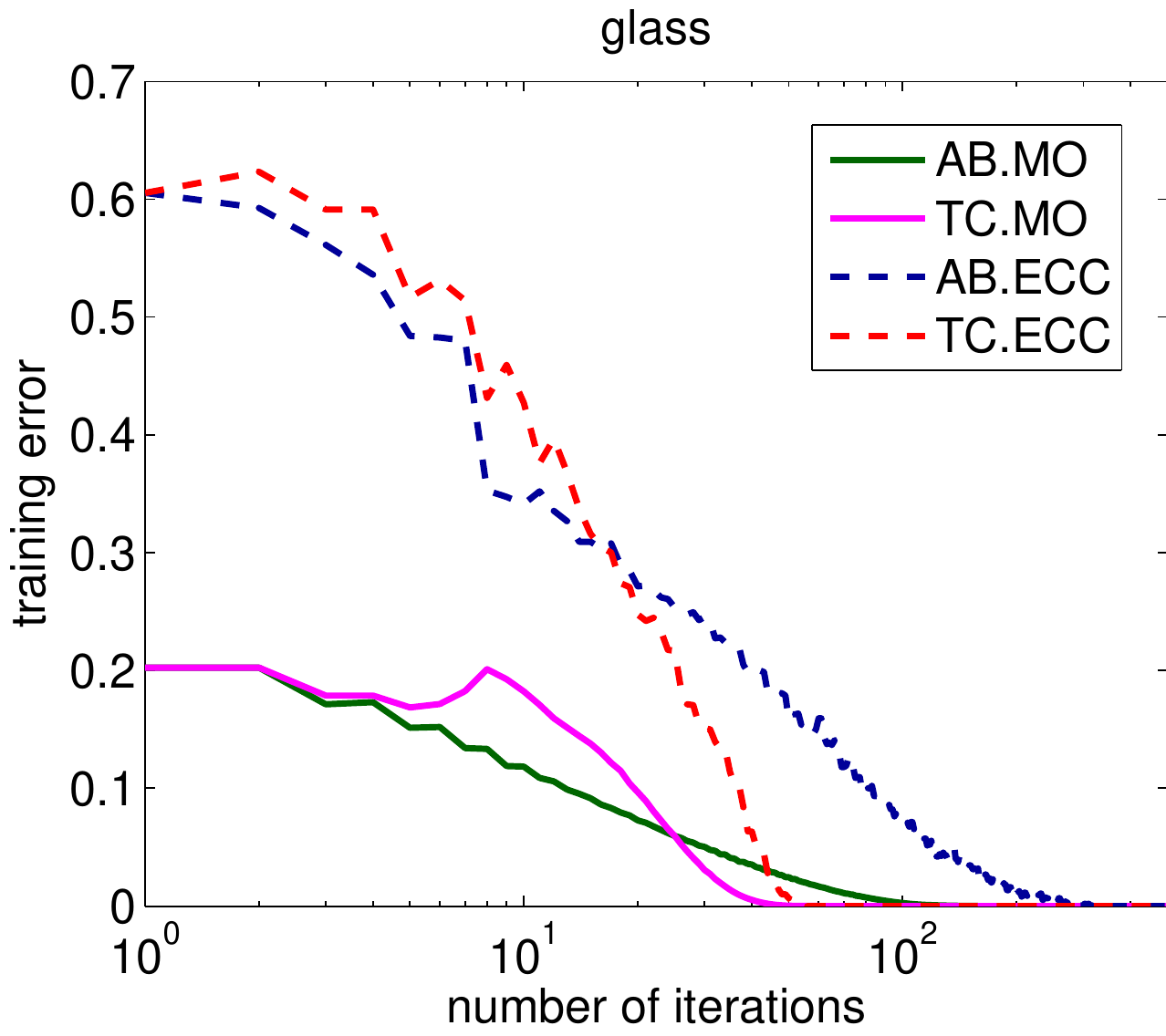}
            \label{fig:train5}
        }
        \subfigure[]
        {
            \includegraphics[width=0.28\textwidth,clip]{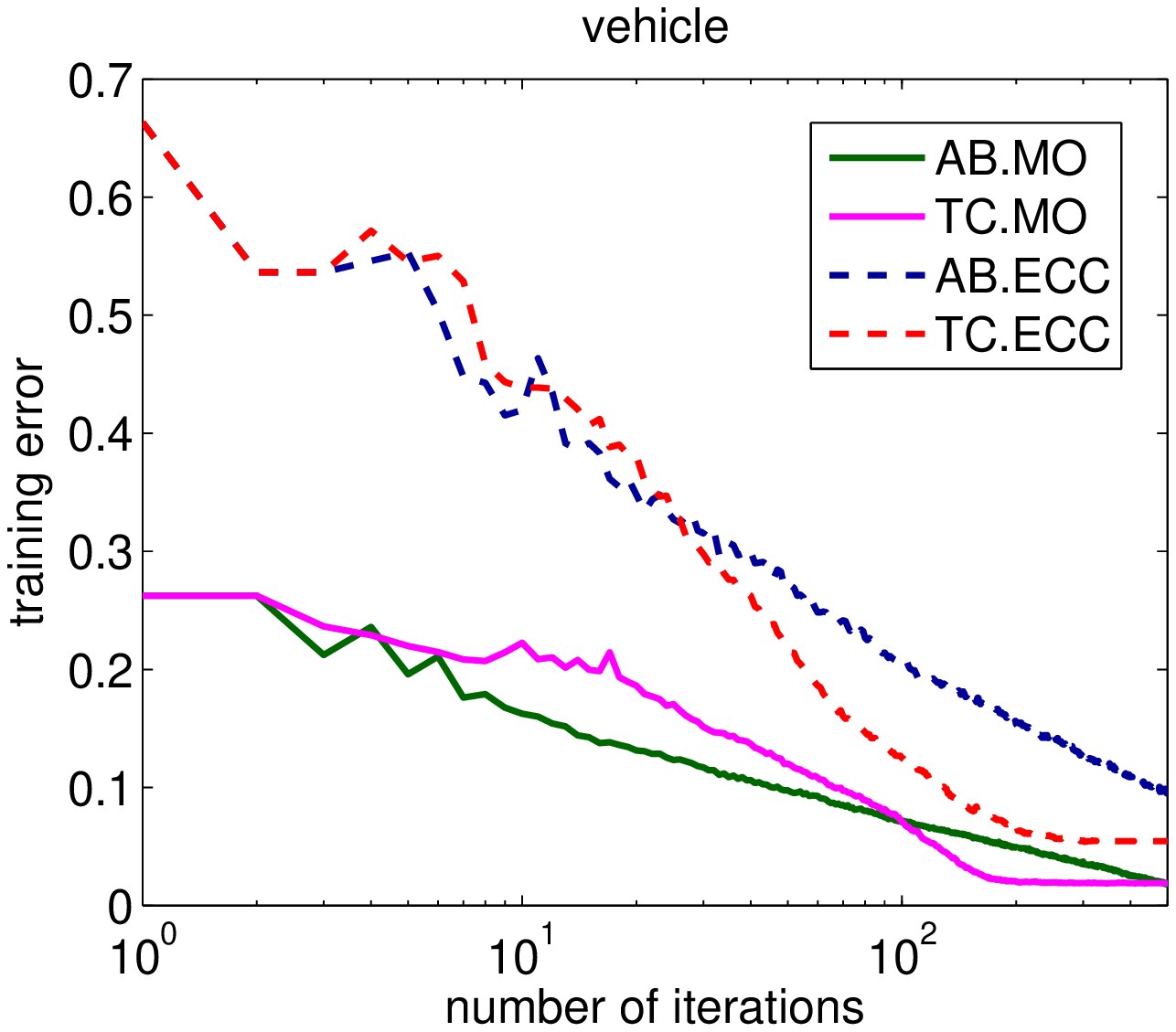}
            \label{fig:train6}
        }
    \end{center}
    \caption
    {
        Training error curves of AdaBoost.MO, \MultiA, AdaBoost.ECC and \MultiB{}
        on \svt, \svf, \win, \iri, \gla{} and \veh.
        The number of training iterations is $500$.
        Base learners are decision stumps.
	}
    \label{fig:train}
    \end{figure*}

    To further verify the generalization capability of our multiclass boosting algorithms,
    we run the Wilcoxon rank-sum test \cite{wilcoxon1970critical} on test errors
    in Tables \ref{table:dstump} and \ref{table:lda}.
    Wilcoxon rank-sum test is a nonparametric statistical tool for assessing the hypothesis
    that two sets of samples are drawn from the identical distribution.
    If the totally corrective boosting is comparable with its counterpart
    in classification error,
    the Wilcoxon test is supposed to output a higher significant probability.
    The results are reported in Table \ref{table:wilcoxon}.
    We can see the probabilities are higher enough ($> 0.8$) to claim the identity,
    except in the case ECC algorithms with decision stumps when $T = 50$
    and ECC algorithms with LDAs when $T = 500$.
    However, if we take a close look at those two cases,
    we can find where our totally corrective algorithms perform better than their counterparts.

    \setlength{\tabcolsep}{8pt}
    \begin{table}[!ht]
    \centering
    \caption{Wilconxon rank-sum test on classification errors}
    \label{table:wilcoxon}
    \begin{tabular}{l|ccc}
    \hline
    	algorithms & $T = 50$ & $T = 100$ & $T = 500$ \\
    \hline
    	MOs\ \ with stumps	& 0.902	& 0.878	& 1.000	\\
    	ECCs with stumps	& 0.743	& 0.821	& 0.983	\\
    	ECCs with LDAs	& 0.959	& 1.000	& 0.798	\\
    \hline
    \end{tabular}
    \end{table}
    \setlength{\tabcolsep}{1.4pt}

    Next, we investigate the minimum margin of training examples,
    which has a close relationship with the generalization error \cite{schapire1998boosting}.
    Warmuth and R\"{a}tsch \cite{warmuth2006totally} have proven that
    by introducing a slack variable $r$,
    totally corrective boosting can realize a larger margin than corrective version
    with the same number of weak hypotheses.
    We test this conclusion on datasets \svt, \svf{} and \iri.
    At each iteration, we record the minimum margin of training examples
    on the current combination of weak hypotheses.
    The results are illustrated in Fig. \ref{fig:margin}.
    It should be noted in algorithms of MOs and ECCs, the definitions of margin are different:
    \eqref{EQ:MO-marg} and \eqref{EQ:ECC-marg} respectively.
    However, it is clear that in any case, totally corrective boosting algorithms increase
    the margin much faster than the two previous ones.


    \begin{figure*}[!ht]
    \begin{center}
        \subfigure[]
        {
            \includegraphics[width=0.28\textwidth,clip]{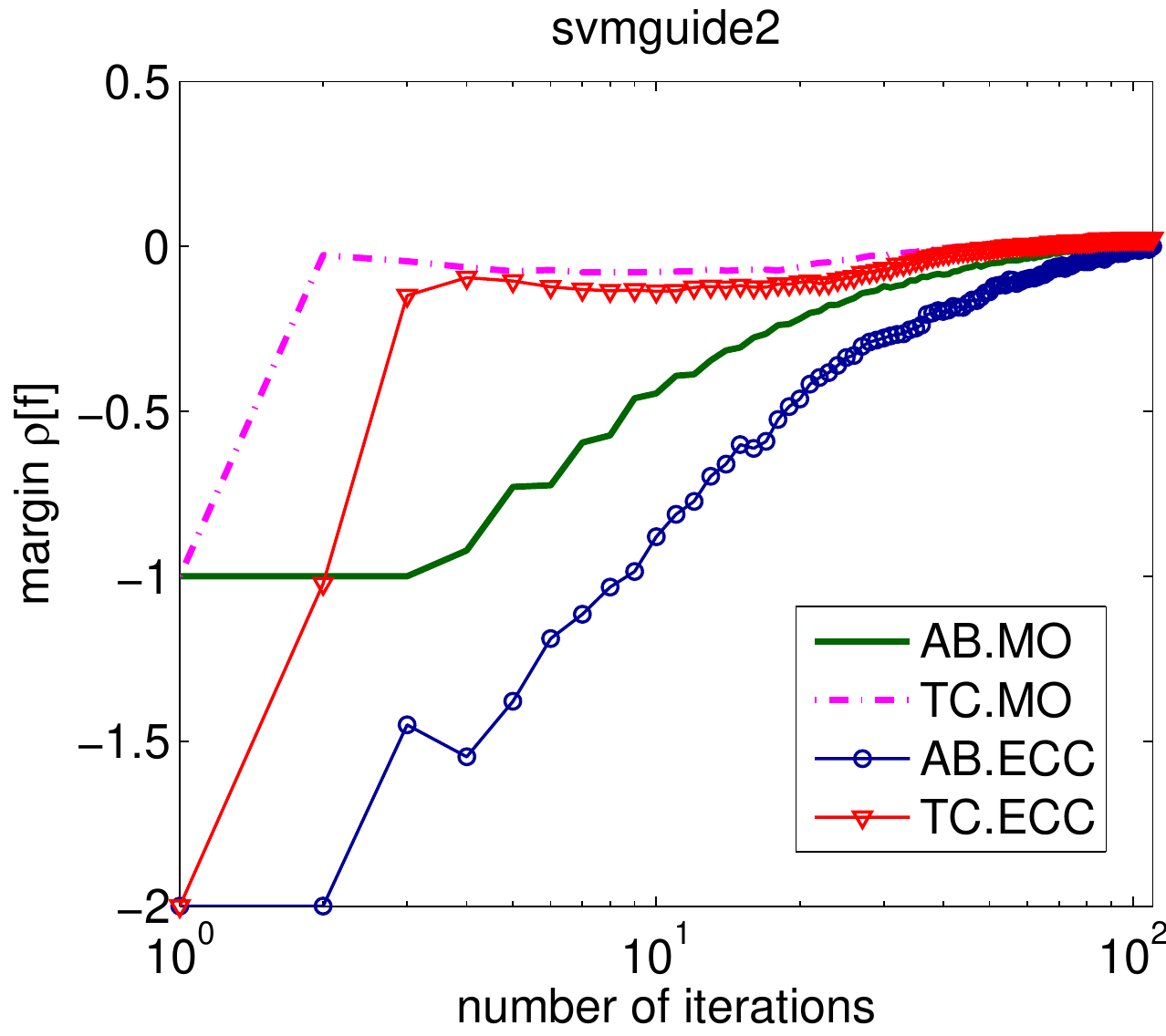}
            \label{fig:margin1}
        }
        \subfigure[]
        {
            \includegraphics[width=0.28\textwidth,clip]{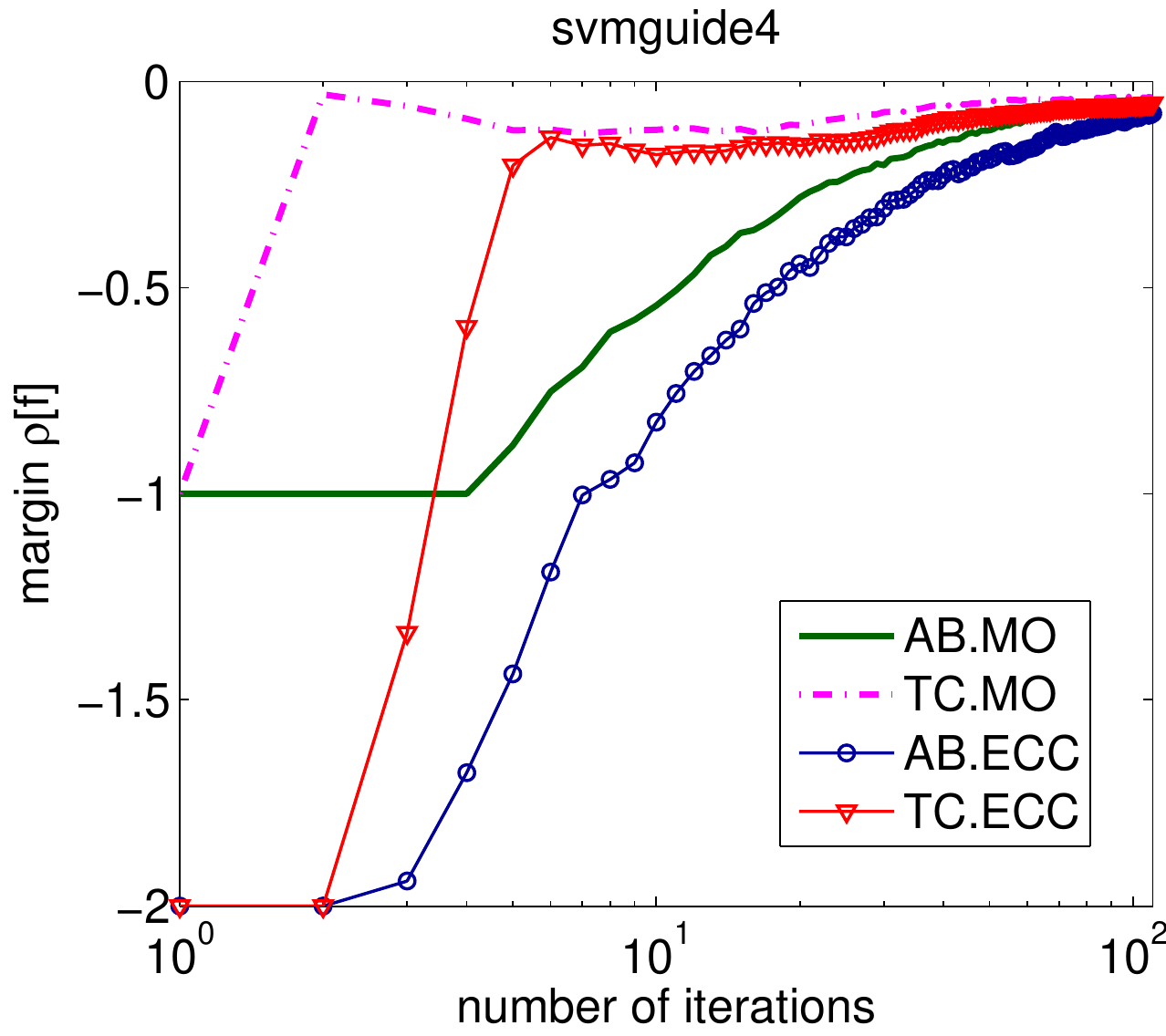}
            \label{fig:margin2}
        }
        \subfigure[]
        {
            \includegraphics[width=0.28\textwidth,clip]{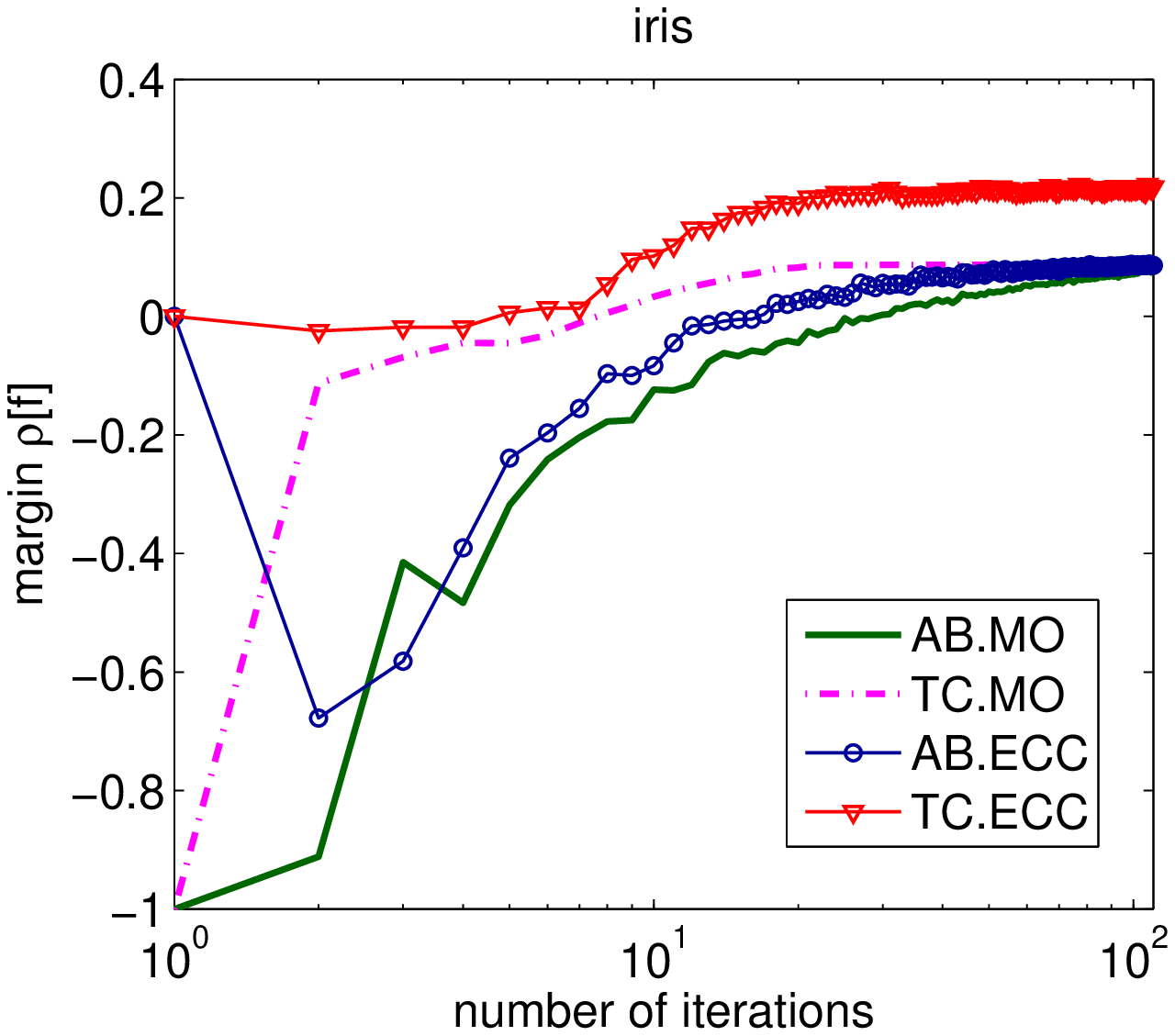}
            \label{fig:margin3}
        }
    \end{center}
    \caption
    {
        The minimum margin on assembled classifiers of AdaBoost.MO, \MultiA,
        AdaBoost.ECC and \MultiB.
        The definitions of margin are different in MOs and ECCs,
        however, it clearly shows that
        the totally corrective algorithms realize a larger margin than their counterparts
        within the same iterations.
	}
    \label{fig:margin}
    \end{figure*}

\section{Discussion and conclusion} \label{Sect:cons}

    We have presented two boosting algorithms for multiclass learning,
    which are mainly based on derivations of the Lagrange dual problems
    for AdaBoost.MO and AdaBoost.ECC.
    Using the column generation technique,
    we design new totally corrective boosting algorithms.
    The two algorithms can be formulated into a general framework base on the concept of margins.
    Actually, this framework can also incorporate other multiclass boosting algorithms,
    such as SAMME. In this paper, however,
    we have focused on multiclass boosting with binary weak learners.

    Furthermore, we indicate that the proposed boosting algorithms are totally corrective.
    This is the first time to use this concept in multiclass boosting learning.
    We also discuss the reason of introducing slack variables.
    Experiments on UCI datasets show that our new algorithms are much faster
    than their gradient descent counterparts in terms of convergence speed,
    but comparable with them in classification capability.
    The experimental results also demonstrate that totally corrective algorithms
    can maximize the example margin more aggressively.

\bibliographystyle{ieee}
\bibliography{mcbbib}

\begin{IEEEbiography}
    {Zhihui Hao}
        is a Ph.D. student in the School of Automation at Beijing Institute of Technology,
        China and currently visiting the Australian National University
        and NICTA, Canberra Research Laboratory, Australia.
        He received the B.E. degree in automation
        at Beijing Institute of Technology in 2006.
        His research interests include object detection, tracking,
        and machine learning in computer vision.
\end{IEEEbiography}

\begin{IEEEbiography}
    {Chunhua Shen}
        completed the Ph.D. degree from School of Computer Science,
        University of Adelaide, Australia in 2005;
        and the M.Phil. degree from Mathematical Sciences Institute,
        Australian National University, Australia in 2009.
        Since Oct. 2005, he has been working with the computer vision program,
        NICTA (National ICT Australia), Canberra Research Laboratory,
        where he is a senior research fellow and holds a continuing research position.
        His main research interests include statistical machine learning
        and its applications in computer vision and image processing.
\end{IEEEbiography}

\begin{IEEEbiography}
{Nick Barnes} completed a Ph.D. in 1999 at the University of Melbourne. In
1999 he was a visiting researcher at the LIRA-Lab, Genoa, Italy. From 2000 to
2003, he lectured in Computer Science and Software Engineering, at the University
of Melbourne. He is now a principal researcher at NICTA Canberra Research Laboratory
in computer vision.

\end{IEEEbiography}

\begin{IEEEbiography}
    {Bo Wang}
        received the Ph.D. degree from the Department of Control Theory and Engineering
        at Beijing Institute of Technology (BIT).
        He is currently working in the School of Automation at BIT.
        His main research interests include
        information detection and state control for high-speed moving objects.
   \end{IEEEbiography}

\end{document}